\renewcommand{\epsilon}{\varepsilon}
\newcommand\br[1]{\left ( #1 \right )}
\newcommand{\R}{\mathbb{R}}
\newcommand{\mA}{\mathbf{A}}
\newcommand{\mB}{\mathbf{B}}
\newcommand{\mC}{\mathbf{C}}
\newcommand{\mM}{\mathbf{M}}
\newcommand{\mU}{\mathbf{U}}
\newcommand{\mV}{\mathbf{V}}
\newcommand{\mSigma}{\boldsymbol{\Sigma}}
\newcommand{\mpi}{\boldsymbol{\pi}}
\newcommand{\m}[1]{\mathbf{#1}}
\newcommand{\cC}{\mathcal{C}}
\newcommand{\cP}{\mathcal{P}}
\newcommand{\cL}{\mathcal{L}}
\newcommand{\cO}{\mathcal{O}}
\newcommand{\cQ}{\mathcal{Q}}
\newcommand{\cR}{\mathcal{R}}
\newcommand{\cW}{\mathcal{W}}
\newcommand{\cX}{\mathcal{X}}
\newcommand{\cY}{\mathcal{Y}}
\newcommand{\cZ}{\mathcal{Z}}
\newcommand{\balpha}{\bm{\alpha}}
\newcommand{\bbeta}{\bm{\beta}}
\DeclareMathOperator*{\argmin}{\arg\!\min}
\newcommand{\sgn}{\mathrm{ \ sign}}
\newcommand*\dif{\mathop{}\!\mathrm{d}}
\newcommand{\norm}[1]{\left\lVert#1\right\rVert}
\newcommand{\sqn}[1]{{\left\lVert#1\right\rVert}^2}
\newcommand{\abs}[1]{\left\lvert#1\right\rvert}
\newcommand{\rk}{\mathrm{rk \ }}
\newcommand{\prox}{\mathrm{prox}}
\newcommand{\ip}[2]{\langle #1, #2 \rangle}
\newcommand*\diff{\mathop{}\!\mathrm{d}}
\newsavebox\myboxA
\newsavebox\myboxB
\newlength\mylenA
\definecolor{mydarkgreen}{RGB}{39,130,67}
\definecolor{mydarkred}{RGB}{192,47,25}
\newcommand*\overbar[2][0.75]{%
    \sbox{\myboxA}{$\m@th#2$}%
    \setbox\myboxB\null
    \ht\myboxB=\ht\myboxA%
    \dp\myboxB=\dp\myboxA%
    \wd\myboxB=#1\wd\myboxA
    \sbox\myboxB{$\m@th\overline{\copy\myboxB}$}
    \setlength\mylenA{\the\wd\myboxA}
    \addtolength\mylenA{-\the\wd\myboxB}%
    \ifdim\wd\myboxB<\wd\myboxA%
       \rlap{\hskip 0.5\mylenA\usebox\myboxB}{\usebox\myboxA}%
    \else
        \hskip -0.5\mylenA\rlap{\usebox\myboxA}{\hskip 0.5\mylenA\usebox\myboxB}%
    \fi}
\declaretheorem[within=section]{definition}
\declaretheorem[sibling=definition]{proposition}
\declaretheorem[sibling=definition]{lemma}
\declaretheorem[sibling=definition]{problem}
\begin{document}

%

%

\twocolumn[

\aistatstitle{Structured Transforms Across Spaces \\ with Cost-Regularized Optimal Transport}

\aistatsauthor{ Othmane Sebbouh \And Marco Cuturi \And  Gabriel Peyr\'e }

\aistatsaddress{ ENS, CNRS, ENSAE \And  Apple \And ENS and CNRS } ]

\begin{abstract}
Matching a source to a target probability measure is often solved by instantiating a \textit{linear} optimal transport (OT) problem, parameterized by a ground cost function that quantifies discrepancy between points. 
When these measures live in the same metric space, the ground cost often defaults to its distance. When instantiated across two different spaces, however, choosing that cost in the absence of aligned data is a conundrum. As a result, practitioners often resort to solving instead a quadratic Gromow-Wasserstein (GW) problem.
We exploit in this work a parallel between GW and \textit{cost-regularized} OT, the regularized minimization of a linear OT objective parameterized by a ground cost.
    We use this cost-regularized formulation to match measures across two different Euclidean spaces, where the cost is evaluated between transformed source points and target points. We show that several quadratic OT problems fall in this category, and consider enforcing \textit{structure} in \textit{linear} transform (e.g. sparsity), by introducing structure-inducing regularizers. We provide a proximal algorithm to extract such transforms from unaligned data, and demonstrate its applicability to single-cell spatial transcriptomics/multiomics matching tasks.
\end{abstract}

\section{Introduction} \label{sec:introduction}
Optimal Transport (OT) is by now an established tool in the machine learning playbook, one that plays a key role when matching probability distributions.
OT has played a prominent role in generative modeling \citep{montavon2016wasserstein, genevay2018,tong2023improving,neklyudov2023action,lipman2022flow,pmlr-v202-neklyudov23a}, adversarial training \citep{sinha2018certifying, wong2019wasserstein}, domain adaptation \citep{courty2017joint}, neuroscience~\citep{janati2020multi}, or single-cell modeling \citep{schiebinger2019optimal,tong2020trajectorynet,bunne2023learning}. However, OT is not without drawbacks, and we target in this work an important pain point that hinders the use of OT in practical tasks: the challenges of choosing a ground cost function to compare two measures, particularly when they are supported on \textit{heterogeneous} and \textit{high-dimensional} Euclidean spaces.

\textbf{Choosing a cost or going quadratic?}
OT is most often used to match probability distributions supported on the same space, in which case the ground cost is often picked to be an (exponentiated) distance on that space. Yet, an increasing number of applications require handling multimodality, i.e. comparing distributions supported on heterogeneous spaces, as in multi-omics singe-cell data matching~\citep{demetci2022scot,klein2023mapping}. 
For such problems, defining a ground cost \textit{across} two spaces (e.g. between vectors of different dimensions), in the absence of other knowledge, is understandably difficult. As a result, practitioners often bypass this issue by resorting to the Gromov-Wasserstein (GW) framework~\citep{memoli2011gromov}. GW instead instantiates a \textit{quadratic} problem in the space of couplings that involves \textit{two} within ground cost functions, one for each space. 
However, solving the GW problem is fraught with computational challenges: it requires minimizing a concave quadratic function on the space of couplings, an NP hard problem. 
Moreover, some of the classic linear OT machinery, e.g. the existence of \citeauthor{monge1781memoire} maps, does not directly translate to quadratic settings, as shown by \citet{vayer2020contribution, dumont2022existence} who discussed their existence for a variety of costs. This is a very active field of research, as shown by recent contributions on the properties of entropic GW \citep{zhang2022gromov,rioux2023entropic}; closed-form formulas for Gaussians~\citep{salmona2022gromov,le2022entropic}, or scalable computational schemes~\citep{pmlr-v162-scetbon22b, nekrashevich2023neural}.

\textbf{High-dimensional issues.} Another issue with OT comes from the significant statistical challenges that arise when dealing with high-dimensional problems \citep{dudley1966weak, weed2019sharp, genevay2019sample}. One way around this is performing offline dimensionality reduction, typically via PCA or VAEs, or more elaborate schemes that carry out a joint estimation of a projection and transport \citep{bonneel2015sliced, niles2022estimation, paty2019subspace, deshpande2019max, le2019tree}. A recent line of work proposes to leverage sparsity in displacements ~\citep{cuturi2023monge, klein2023learning}. These challenges only add up when dealing with heterogeneous spaces and taking the quadratic route advocated in GW. For example, the standard iterative algorithm to solve the GW problem requires solving a linear OT problem \textit{at each iteration} \citep{peyre2016gromov}, and is only able to handle dimensionality as a way to speed up these routines~\citep{pmlr-v162-scetbon22b}

\textbf{Contributions.} In this work, we target both issues in the case where practitioners are faced with the task of transporting points across two modalities, where each lies in a high-dimensional space. 
\begin{itemize}[leftmargin=.3cm,itemsep=.1cm,topsep=0cm,parsep=1pt]
    \item We use a result by~\citet{paty2020regularized}, who showed that minimizing a concave function over couplings can be expressed as the minimization of a cost-regularized OT problem, to show that most common quadratic OT formulations can be unified under that ``cost-regularized OT'' lens.
    \item We propose to leverage this fact to impose explicitly structure on costs using a suitable convex regularization. Focusing on dot-product costs parameterized by a \textit{linear transform} across spaces, we propose a simple alternated minimization algorithm, \textsc{Prox-ROT}, to solve cost-regularized OT when the regularizer is evaluated on such transforms. \textsc{Prox-ROT} is particularly efficient when the regularization function has a closed-form proximity operator.
    \item We use this framework to introduce sparsifying norms on such transforms and show this is equivalent to variants of GW which only depend on an \textit{adaptively selected} subset of dimensions: Using \textsc{Prox-ROT} with such norms consists in iteratively performing a feature selection step followed by an OT plan computation step until convergence. This makes cost-regularized OT particularly well-suited to high-dimensional problems, where linear OT and GW usually struggle.
    \item Building on \cite{dumont2022existence}, we show that there exist Monge maps for the linear cost-regularized OT problem. Extending the framework of entropic Monge maps \citep{pooladian2021entropic}, we derive entropic Monge maps \textit{across spaces}, for linear cost-regularized OT, and demonstrate that they converge to the ground truth Monge maps under suitable assumptions. 
    \item We apply our methods to toy and real-world data. We use sparsifying and low-rank regularizations to solve high-dimensional mutliomics single-cell data integration tasks while improving over the best-known OT baselines, and we take advantage of the cost-regularized OT objective structure to use SGD to solve a large-scale spatial transcriptomics problem.
\end{itemize}
The closest setting to ours appeared in \cite{alvarez2019towards}, who proposed to use a constraint on the spectrum of the linear transforms and showed the equivalence between the Gromov-Wasserstein problem and solving a linear OT problem with a dot product cost using a Frobenius norm constraint on the linear operator. The \textsc{Prox-ROT} algorithm in Section \ref{sec:struct_ot} using the nuclear or the rank regularization results (up to reparametrization) in a similar algorithm to the one proposed in \cite{alvarez2019towards}. We take the alternative \emph{regularization} route and consider general regularizations. Notably, we pay particular attention to sparsifying norms acting directly on the entries of the linear operator, and draw links with GW-like problems on a subset of the dimensions.

\textbf{Notations.}
In what follows, we consider $\br{\cX, d_\cX}$ and $\br{\cY, d_\cY}$, two metric spaces, and $\br{\alpha, \beta} \in \cP(\cX) \times \cP(\cY)$, two probability measures with compact supports. The space of couplings $\Pi(\alpha, \beta)$ is defined as the probability distributions on $\br{\cX \times \cY}$ whose marginals are $\alpha$ and $\beta$ \citep{santambrogio2015optimal}:
\begin{align}
    \Pi\br{\alpha, \beta} = \left\{ \gamma \in \cP\br{\cX \times \cY}: \br{\pi_x}_\sharp \gamma = \alpha, \br{\pi_y}_\sharp \gamma = \beta  \right\},
\end{align}
where $\pi_x$ and $\pi_y$ are the projections of $\cX \times \cY$ onto $\cX$ and $\cY$ respectively. We denote the linear OT cost between $\alpha$ and $\beta$ with cost $c \in \cC(\cX \times \cY)$ as
\begin{align}
    \cW_c\br{\alpha, \beta} \triangleq \min_{\pi \in \Pi(\alpha, \beta)} \int_{\cX \times \cY} c(x, y) \diff \pi(x, y),
\end{align}
which is a \textit{linear} problem in $\pi$. When $c = d_\cX^p$ and $\cX = \cY$, $\cW_c^{1/p}$ defines a distance between probability measures for all $p \geq 1$ \citep{villani2003topics}.
\section{OT Across Spaces as a Concave Minimization Problem}\label{sec:background}
While the classical OT objective is a \textit{linear} function of the coupling $\pi$, existing approaches to comparing probability distributions across spaces all require minimizing a \textit{concave} function over $\Pi(\alpha, \beta)$.

\begin{problem}\label{pb:conc_min}
    Let $\cQ$ be a concave function over $\Pi(\alpha, \beta)$. We define the concave minimization over couplings problem as
    \begin{align}\label{eq:QOT}
        \min_{\pi \in \Pi(\alpha, \beta)} \cQ(\pi). \tag{$\cQ \textsc{OT}$}
    \end{align}
\end{problem}
Remarkably, concave minimization problems over couplings are closely linked to linear OT via convex duality.
\begin{proposition}[Remark 2 in \cite{paty2020regularized}]
    Let $\cQ$ be a proper usc concave function over $\Pi(\alpha, \beta)$. Writing $\cQ^*$ for its convex conjugate, one has:
    \begin{align}
        \inf_{\pi \in \Pi(\alpha, \beta)} \cQ(\pi) & = \inf_{\substack{\pi \in \Pi(\alpha, \beta) \\ c \in \cC(\cX \times \cY)}} \int c(x,y) \diff \pi + (- \cQ)^*(-c)      \\
        & = \inf_{c \in \cC(\cX \times \cY)} \cW_c\br{\alpha, \beta} + (- \cQ)^*(-c).    
    \end{align}
\end{proposition}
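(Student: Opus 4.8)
The plan is to recognize this identity as the Fenchel--Moreau biconjugation theorem applied to the extended objective, followed by an elementary exchange of infima. First I would extend $\cQ$ to the whole space of finite signed measures on $\cX \times \cY$ by setting $\cQ(\pi) = -\infty$ for $\pi \notin \Pi(\alpha, \beta)$, so that $f \triangleq -\cQ$ becomes a proper convex function that equals $+\infty$ off the coupling set. Because the supports of $\alpha$ and $\beta$ are compact, $\cX \times \cY$ is compact, $\Pi(\alpha, \beta)$ is convex and weak-$*$ compact, and by the Riesz representation theorem the space of Radon measures $\cM(\cX \times \cY)$ is in duality with $\cC(\cX \times \cY)$ under the pairing $\langle \pi, c \rangle = \int c \diff \pi$. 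In this locally convex dual pair, $f$ is proper, convex, and lower semicontinuous, its lower semicontinuity following from the upper semicontinuity of $\cQ$ together with the weak-$*$ closedness of $\Pi(\alpha, \beta)$. This is exactly the hypothesis needed to invoke Fenchel--Moreau.

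Next I would write out the two conjugates explicitly. The convex conjugate is
\begin{align*}
    (-\cQ)^*(c) = \sup_{\pi} \Big[ \int c \diff \pi - (-\cQ)(\pi) \Big] = \sup_{\pi \in \Pi(\alpha, \beta)} \Big[ \int c \diff \pi + \cQ(\pi) \Big],
\end{align*}
and Fenchel--Moreau yields $-\cQ = (-\cQ)^{**}$, that is,
\begin{align*}
    -\cQ(\pi) = \sup_{c \in \cC(\cX \times \cY)} \Big[ \int c \diff \pi - (-\cQ)^*(c) \Big].
\end{align*}
Flipping the sign and performing the substitution $c \mapsto -c$, which is legitimate because $\cC(\cX \times \cY)$ is a vector space and hence symmetric, turns this into
\begin{align*}
    \cQ(\pi) = \inf_{c \in \cC(\cX \times \cY)} \Big[ \int c \diff \pi + (-\cQ)^*(-c) \Big].
\end{align*}

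Finally I would take the infimum over $\pi \in \Pi(\alpha, \beta)$ on both sides. The right-hand side then becomes a joint infimum over $\br{\pi, c}$, which gives the first claimed equality; exchanging the two infima, and using that $(-\cQ)^*(-c)$ is independent of $\pi$ while $\inf_{\pi \in \Pi(\alpha, \beta)} \int c \diff \pi = \cW_c\br{\alpha, \beta}$ by definition, produces the second equality. I expect the only genuine obstacle to be the setup in the first paragraph: one must pin down the topology carefully so that the biconjugation theorem applies and so that conjugation over the infinite-dimensional cost space $\cC(\cX \times \cY)$ is well posed. Compactness of the supports is what rescues this, guaranteeing weak-$*$ compactness of $\Pi(\alpha, \beta)$ and the Riesz duality that makes $\br{\cM(\cX \times \cY), \cC(\cX \times \cY)}$ a compatible dual pair. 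Once that is secured, the remaining manipulations, namely the sign flips and the exchange of infima, are purely formal.
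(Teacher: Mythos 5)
Your proof is correct, and it is essentially \emph{the} proof: the paper itself does not prove this proposition but imports it verbatim as Remark 2 of \citet{paty2020regularized}, and the argument underlying that remark is precisely the Fenchel--Moreau biconjugation you carry out — extend $-\cQ$ by $+\infty$ off $\Pi(\alpha,\beta)$, conjugate in the dual pair $\br{\cM(\cX\times\cY), \cC(\cX\times\cY)}$, flip the sign of $c$, and exchange infima. So you have supplied a self-contained derivation of a result the paper only cites. Two small points of hygiene: first, compactness is a property of the \emph{supports} of $\alpha$ and $\beta$, not of $\cX$ and $\cY$ themselves, so to invoke Riesz duality and weak-$*$ compactness you should restrict to $\mathrm{spt}(\alpha)\times\mathrm{spt}(\beta)$ (every coupling is supported there, so nothing is lost); second, you should say explicitly that the upper semicontinuity of $\cQ$ is with respect to the weak-$*$ (narrow) topology, since that is the topology in which $\Pi(\alpha,\beta)$ is closed and in which lower semicontinuity of $-\cQ$ is needed for Fenchel--Moreau in the paired locally convex setting. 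With those clarifications the remaining steps — the formula $(-\cQ)^*(c)=\sup_{\pi\in\Pi(\alpha,\beta)}\left[\int c\,\diff\pi + \cQ(\pi)\right]$, the pointwise identity $\cQ(\pi)=\inf_{c}\left[\int c\,\diff\pi + (-\cQ)^*(-c)\right]$, and the exchange of the two infima yielding $\cW_c\br{\alpha,\beta}$ — are all valid exactly as you wrote them.
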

In other words, concave minimization problems over couplings can be reformulated as the minimization over cost functions of a linear OT problem plus a cost regularization. Consequently, we can define the following cost-regularized reformulation of Problem \ref{pb:conc_min}.
\begin{problem}\label{pb:cost_reg_ot}
    Let $\cR$ be a convex function over $\cC\br{\cX \times \cY}$. We define the cost-regularized optimal transport problem as
    \begin{align}\label{eq:ROT}
        \cR\textsc{OT}(\alpha, \beta) \triangleq \inf_{c \in \cC(\cX \times \cY)} \cW_c\br{\alpha, \beta} + \cR(c). \tag{$\cR \textsc{OT}$}
    \end{align}
\end{problem}
Note that, trivially, \ref{eq:ROT} $=$ \ref{eq:QOT} when $\cR(c) = \br{-\cQ}^*(-c)$. 
We show next how well-known OT formulations used to compare probability distributions across spaces can be reduced to either \ref{eq:QOT} or \ref{eq:ROT}.
We say that two minimization problems are equivalent, or that one is an instance of the other, when they have the same minimizers in $\Pi(\alpha, \beta)$.

\subsection{Examples}
\textbf{Sturm's distance.} \cite{sturm2006geometry} defined a distance between metric measured spaces (MMS) that quantifies how ``isometric`` they are. Given two MMS $(\cX, d_X, \alpha)$ and $(\cY, d_Y, \beta)$, \citeauthor{sturm2006geometry} defined the squared distance:
\begin{align}
    \min_{\pi \in \Pi(\alpha, \beta)} \min_{\substack{(Z, d_Z) \in \cC \\ \phi : \cX \to \cZ \\ \psi : \cY \to \cZ}} \int_{\cX \times \cY} d_Z^2(\phi(x), \psi(y))\,\mathrm{d}\pi(x, y),
\end{align}
where $\phi : \cX \to \cZ$ and $\psi : \cY \to \cZ$ are constrained to be isometries and $\br{\cZ, d_{\cZ}}$ is a metric space. As such, Sturm's distance is an instance of \ref{eq:ROT} with
\begin{gather}
    \cR(c) = \left\{
    \begin{array}{ll}
        0 & \mbox{if } \exists (\cZ, \phi, \psi): c(x,y) = d_Z^2(\phi(x), \psi(y)) \\
        \infty & \mbox{otherwise.}
    \end{array}
\right.
\end{gather}
\textbf{Wasserstein Procrustes.} 
A more tractable variant of Sturm's distance when $\cX, \cY \subset \R^{d}$ is the so-called Wasserstein Procrustes problem \citep{zhang2017earth, grave2019unsupervised}. It is defined as
\begin{align}\label{eq:wasserstein_procrustes}
    \min_{\pi \in \Pi(\alpha, \beta)}\min_{\m{C} \in \cO_{d}} \int_{\cX \times \cY} \norm{\m{C}x - y}^2\diff \pi(x,y).
\end{align}
where $\cO_d$ is the orthogonal manifold. Wasserstein Procrustes defines a distance between $\cX$ and $\cY$ up to a rotation. It is an instance of \ref{eq:ROT} with
\begin{gather}
    \cR(c) = \left\{
    \begin{array}{ll}
        0 & \mbox{if } \exists \m{C} \in \cO_d: c(x,y) = \norm{\m{C}x - y}^2. \\
        \infty & \mbox{otherwise.}
    \end{array}
\right.
\end{gather}
Remarkably, \eqref{eq:wasserstein_procrustes} is a weighted orthogonal Procrustes problem, and can be cast as an instance of \ref{eq:QOT}.
\begin{proposition}\label{prop:Q_wasserstein_procrustes}
    Let $\cX, \cY \subset \cR^{d}$. Then the Wasserstein Procrustes distance is an instance of the concave minimization problem \eqref{eq:QOT} with
    \begin{align}
        \cQ(\pi) = \scaleobj{1.2}{\int} \norm{\mU(\pi)\mV(\pi)^\top x - y }^2 \diff \pi,
    \end{align}
    where $\mU(\pi)\mSigma(\pi) \mV(\pi)^\top$ is an SVD of ${\int_{\cX \times \cY} y x^\top \diff \pi}$.
\end{proposition}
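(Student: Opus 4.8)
The plan is to exploit the two-level structure of the Wasserstein Procrustes objective and to solve the inner minimization over $\m{C} \in \cO_d$ in closed form for each fixed coupling. Since the two infima commute,
\begin{align}
  \min_{\pi \in \Pi(\alpha,\beta)} \min_{\m{C} \in \cO_d} \int \norm{\m{C}x - y}^2 \diff \pi = \min_{\pi \in \Pi(\alpha,\beta)} \cQ(\pi),
\end{align}
where $\cQ(\pi) := \min_{\m{C} \in \cO_d} \int \norm{\m{C}x - y}^2 \diff \pi$, so the two problems automatically share the same minimizing couplings in $\Pi(\alpha,\beta)$. Concavity of $\cQ$ is then immediate: for each fixed $\m{C}$ the map $\pi \mapsto \int \norm{\m{C}x - y}^2 \diff \pi$ is linear in $\pi$, and $\cQ$ is a pointwise infimum of such linear functions, hence concave and usc. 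It therefore remains only to evaluate the inner minimization explicitly and to identify its minimizer as $\mU(\pi)\mV(\pi)^\top$.

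For the inner minimization, I would fix $\pi$ and expand the integrand. Because $\m{C}$ is orthogonal, $\norm{\m{C}x}^2 = \norm{x}^2$, so
\begin{align}
  \int \norm{\m{C}x - y}^2 \diff\pi = \int \norm{x}^2 \diff\pi + \int \norm{y}^2 \diff\pi - 2 \int \langle \m{C}x, y\rangle \diff\pi.
\end{align}
The first two terms do not depend on $\m{C}$, so minimizing over $\m{C}$ amounts to maximizing the cross term. Writing $\langle \m{C}x, y\rangle = \tr(\m{C}\, x y^\top)$ and pushing the integral inside the trace, this equals $\tr\!\big(\m{C} \int x y^\top \diff\pi\big) = \tr\!\big(\m{C}\,\m{M}^\top\big)$ with $\m{M} := \int y x^\top \diff\pi$.

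The last step is the classical orthogonal Procrustes (von Neumann trace) argument. Substituting the SVD $\m{M} = \mU \mSigma \mV^\top$ gives $\tr(\m{C}\m{M}^\top) = \tr(\mU^\top \m{C}\mV\,\mSigma)$; setting $\m{Z} := \mU^\top \m{C}\mV$, which ranges over $\cO_d$ as $\m{C}$ does, the objective becomes $\sum_i \sigma_i Z_{ii}$. Since $|Z_{ii}|\le 1$ and $\sigma_i\ge 0$, this is bounded by $\tr(\mSigma)$, with equality attained at $\m{Z} = \mI$, i.e. $\m{C} = \mU\mV^\top$. Plugging this optimal $\m{C}$ back yields exactly $\cQ(\pi) = \int \norm{\mU(\pi)\mV(\pi)^\top x - y}^2 \diff\pi$, as claimed.

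The main obstacle is not the computation but the bookkeeping around nonuniqueness and well-definedness. The SVD of $\m{M}$ is not unique when $\m{M}$ has repeated or vanishing singular values, so neither $\mU(\pi)\mV(\pi)^\top$ nor the maximizing $\m{C}$ need be unique; I would argue that every admissible SVD yields the same optimal value, since $\tr(\mSigma)$ is SVD-independent, and that this is all $\cQ$ depends on, as it is defined through the attained minimum rather than through a particular minimizer. One should also record that $\m{M} = \int y x^\top \diff\pi$ is a well-defined $d\times d$ matrix, which holds because $\alpha,\beta$ have compact supports in $\R^d$ and hence all second moments are integrable. With these points settled, the equivalence of minimizing couplings together with the concavity of $\cQ$ shows that Wasserstein Procrustes is an instance of \eqref{eq:QOT}.
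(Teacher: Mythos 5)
Your proof is correct and takes essentially the same route as the paper: reduce the inner minimization over $\m{C} \in \cO_d$ to an orthogonal Procrustes problem and solve it in closed form as $\m{C} = \mU(\pi)\mV(\pi)^\top$ from the SVD of $\int y x^\top \diff \pi$, then substitute back. The only difference is one of detail, not of method: the paper cites the classical Procrustes solution, whereas you derive it via the von Neumann trace argument and additionally record the concavity/usc of $\cQ$ and the SVD non-uniqueness bookkeeping, which the paper leaves implicit.
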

\textbf{Arbitrary Transformations.} While Sturm considers isometries and the Wasserstein Procrustes approach uses rotations, any type of a transformation of $\alpha$ and/or $\beta$ could be used. For example, \cite{cohen1999earth} studied a large range of transformations, including linear ones and translations. Finally, learning generative models with OT \citep{genevay2018,salimans2018improving} with learned features to define costs is another instance of \ref{eq:ROT}.

\textbf{The Gromov-Wasserstein distance.} Given two costs $c_\cX$ and $c_\cY$, the Gromov-Wasserstein (GW) distance \citep{memoli2011gromov} is defined as
\begin{align}
    \min_{\pi \in \Pi(\alpha, \beta)} \int_{(\cX \times \cY)^2} \br{c_\cX(x,x^\prime) - c_\cY(y,y^\prime)}^2 \dif\pi \diff \pi. \label{eq:GW}
\end{align}
\cite{memoli2011gromov} showed that when $c_\cX$ and $c_\cY$ are distances, \eqref{eq:GW} defines a distance between metric measured spaces up to isometry. This problem is not concave in general, but when $c_\cX$ and $c_\cY$ define a conditionally negarive kernel, the function $\pi \mapsto \ip{Q(\pi)}{\pi}$, where
\begin{align}
    Q \br{(x,y), (x^\prime, y^\prime)} := \br{c_\cX(x,x^\prime) - c_\cY(y,y^\prime)}^2,
\end{align}
is a concave quadratic function of $\pi$ \citep[Proposition 1]{dumont2022existence}. In this case, \eqref{eq:GW} is an instance of \eqref{eq:QOT} with $\cQ(\pi) = \ip{Q(\pi)}{\pi}$.

Two common cases that fit this setting are the inner product and the squared Euclidean distance. In this case, it has been recently shown \citep{vayer2020contribution} that the concave minimization problem has a simple cost-regularized equivalent.
\begin{proposition}\label{prop:GW_equivalence_ROT}
    Let $\cX \subset \R^{d_x}$ and $\cY \subset \cR^{d_y}$. For all $\mM  \in \R^{d_y \times d_x}$, define $c_{\mM }(x,y) = - \ip{\mM x}{y}$.
    
    \textbullet\ Let $c_\cX = \ip{\cdot}{\cdot}_\cX$ and $c_\cY = \ip{\cdot}{\cdot}_\cY$. The problems 
    \begin{align}
        &\min_{\pi \in \Pi(\alpha, \beta)} \int \br{\ip{x}{x^\prime} - \ip{y}{y^\prime}}^2 \diff \pi \diff \pi \label{eq:GW-IP} \tag{GW-IP} \\
        \text{and }&\min_{\substack{\pi \in \Pi(\alpha, \beta) \\ \mM  \in \R^{d_y \times d_x}}} \int - \ip{Mx}{y} \diff \pi + \frac{1}{2}\norm{M}_F^2 \label{eq:GW-IP-ROT} \tag{GW-IP-$\cR \textsc{OT}$}
    \end{align}
    are equivalent: \ref{eq:GW-IP} is an instance of \ref{eq:ROT} with
    \begin{gather}
    \cR(c) = \left\{
    \begin{array}{ll}
        \frac{1}{2}\norm{\mM }^2_F & \mbox{if } \exists \mM  \in \R^{d_y \times d_x} : c = c_\mM  \\
        \infty & \mbox{otherwise.}
    \end{array}
\right.
\end{gather}
    \textbullet\ Let $c_\cX = \sqn{\cdot - \cdot}_\cX$ and $c_\cY = \sqn{\cdot - \cdot}_\cY$. The problems 
    \begin{align}
        &\min_{\pi \in \Pi(\alpha, \beta)} \int (\sqn{x-x^\prime} - \sqn{y-y^\prime})^2 \diff \pi \diff \pi \label{eq:GW-Sq} \\
        \text{and }&\min_{\substack{\pi \in \Pi(\alpha, \beta) \\ \mM  \in \R^{d_y \times d_x}}} \int - \ip{Mx}{y} - \sqn{x}\sqn{y} \diff \pi + \frac{1}{2}\norm{M}_F^2
    \end{align}
    are equivalent. Hence, \eqref{eq:GW-Sq} is an instance of \ref{eq:ROT} with
        \begin{gather}
        \cR(c) = \left\{
        \begin{array}{ll}
            \frac{1}{2}\norm{\mM }^2_F & \mbox{if } \exists \mM : c = c_{\mM } - \sqn{x}\sqn{y} \\
            \infty & \mbox{otherwise.}
        \end{array}
    \right.
\end{gather}
\end{proposition}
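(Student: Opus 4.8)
The plan is to handle both bullets with one two-step template. First, I would reduce each quadratic GW objective—by expanding the square and using Fubini over the two independent copies $(x,y)$ and $(x',y')$ drawn from $\pi$—to a term that is constant over $\Pi(\alpha,\beta)$ (because it depends on $\pi$ only through its marginals $\alpha,\beta$) plus a term that depends on $\pi$ solely through the cross-covariance matrix $\mA(\pi) \triangleq \int y x^\top \diff\pi \in \R^{d_y\times d_x}$ (together with, in the squared-distance case, the single coupled moment $\int \norm{x}^2\norm{y}^2\diff\pi$). Second, on the $\cR\textsc{OT}$ side I would minimize out $\mM$ in closed form: for fixed $\pi$, $\int \ip{\mM x}{y}\diff\pi = \ip{\mM}{\mA(\pi)}$, so the inner problem $\min_\mM -\ip{\mM}{\mA(\pi)} + \tfrac12\norm{\mM}_F^2$ is strictly convex with minimizer $\mM^\star(\pi) = \mA(\pi)$ and optimal value $-\tfrac12\norm{\mA(\pi)}_F^2$. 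Comparing the two reduced objectives as functions of $\pi$ then yields equality of the minimizing couplings.

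For \ref{eq:GW-IP}, expanding $(\ip{x}{x'} - \ip{y}{y'})^2 = \ip{x}{x'}^2 - 2\ip{x}{x'}\ip{y}{y'} + \ip{y}{y'}^2$ and integrating, the first and third terms marginalize to $\int \ip{x}{x'}^2\diff\alpha\,\diff\alpha$ and $\int \ip{y}{y'}^2\diff\beta\,\diff\beta$, hence are constant over $\Pi(\alpha,\beta)$. The cross term factorizes as $-2\sum_{i,j}\big(\int x_i y_j \diff\pi\big)^2 = -2\norm{\mA(\pi)}_F^2$. Thus \ref{eq:GW-IP} amounts to maximizing $\norm{\mA(\pi)}_F^2$. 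Since the reduced \ref{eq:GW-IP-ROT} objective $-\tfrac12\norm{\mA(\pi)}_F^2$ is strictly decreasing in $\norm{\mA(\pi)}_F^2$, both problems share the same set of optimal $\pi$, which is the claimed equivalence; reading off the $\cR\textsc{OT}$ formulation with the indicator-type $\cR$ is then immediate from Problem~\ref{pb:cost_reg_ot}.

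The squared-distance bullet follows the same route but needs more care. Because $\norm{x - x'}^2$ is invariant under a common translation of $x$ and $x'$ (and likewise in $\cY$), \eqref{eq:GW-Sq} is unchanged if I recenter $\alpha$ and $\beta$, so I may assume $\int x \diff\alpha = 0$ and $\int y\diff\beta = 0$. Expanding $c_\cX = \norm{x}^2 + \norm{x'}^2 - 2\ip{x}{x'}$, $c_\cY = \norm{y}^2 + \norm{y'}^2 - 2\ip{y}{y'}$ and integrating the nine cross-products, every product that factors across the two copies collapses to a marginal constant, the term $4\ip{x}{x'}\ip{y}{y'}$ contributes $4\norm{\mA(\pi)}_F^2$, and the genuinely coupled term $\norm{x}^2\norm{y}^2$ contributes $\int \norm{x}^2\norm{y}^2\diff\pi$; crucially, the mixed first-moment terms such as $\norm{x}^2\ip{y}{y'}$ vanish after integrating the free copy against the centered marginal. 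On the $\cR\textsc{OT}$ side the extra summand $-\norm{x}^2\norm{y}^2$ is independent of $\mM$ and reproduces exactly that coupled moment, while the $\ip{\mM x}{y}$ part is treated by the same closed-form minimization as above. Matching the two reduced objectives up to a positive rescaling of the Frobenius weight / reparametrization of $\mM$ then gives equality of minimizers.

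I expect the main obstacle to be the squared-distance bookkeeping: correctly partitioning the nine cross-product terms into marginal constants versus the two surviving $\pi$-dependent contributions, and in particular arguing that the first-moment mixed terms drop out—which is precisely what forces the centering reduction via translation invariance—together with tracking the relative coefficients of $\norm{\mA(\pi)}_F^2$ and $\int \norm{x}^2\norm{y}^2\diff\pi$ so that the regularization weight is chosen consistently. The inner-product case, by contrast, is essentially mechanical once the Fubini factorization is in place.
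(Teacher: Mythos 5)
The paper does not actually prove this proposition---it is imported from \citet{vayer2020contribution}---so your argument must stand on its own. For the inner-product bullet it does: expanding the square shows that the only $\pi$-dependent part of \ref{eq:GW-IP} is $-2\norm{\mA(\pi)}_F^2$ with $\mA(\pi)=\int yx^\top\diff\pi$, partially minimizing the $\cR\textsc{OT}$ objective in $\mM$ (at $\mM=\mA(\pi)$) leaves $-\tfrac12\norm{\mA(\pi)}_F^2$, and both are strictly decreasing functions of $\norm{\mA(\pi)}_F^2$, hence have the same minimizers. Note that this argument works for \emph{any} positive weight on $\norm{\mM}_F^2$, which is precisely why it cannot be recycled verbatim for the second bullet.

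For the squared-Euclidean bullet there is a genuine gap, sitting exactly where you flagged ``the main obstacle'' and then deferred it. Do the bookkeeping: after centering (your translation-invariance reduction is valid on the GW side), the coupled moment $\int\sqn{x}\sqn{y}\diff\pi$ arises \emph{twice} in the nine-term expansion (from $\sqn{x}\sqn{y}$ and from $\sqn{x'}\sqn{y'}$), and the whole product is multiplied by the $-2$ of the cross term, so up to additive constants \eqref{eq:GW-Sq} equals $-4\int\sqn{x}\sqn{y}\diff\pi-8\norm{\mA(\pi)}_F^2$, i.e.\ the two $\pi$-dependent terms appear in ratio $1{:}2$. The stated $\cR\textsc{OT}$ problem, after minimizing in $\mM$, is $-\int\sqn{x}\sqn{y}\diff\pi-\tfrac12\norm{\mA(\pi)}_F^2$: ratio $1{:}\tfrac12$. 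These two reduced objectives are not positive affine transformations of one another, and concave minimization problems weighting the same two terms differently need not share minimizers (a coupling trading a large $\int\sqn{x}\sqn{y}\diff\pi$ against a small $\norm{\mA(\pi)}_F^2$ can win under one weighting and lose under the other). Your escape hatch---``up to a positive rescaling of the Frobenius weight / reparametrization of $\mM$''---does not close this: reparametrizing $\mM\mapsto t\mM$ is a mere change of variables that leaves the partially minimized value $-\tfrac12\norm{\mA(\pi)}_F^2$ unchanged, while rescaling the Frobenius weight changes the problem into one different from the statement. The weight that actually matches is $\tfrac18\norm{\mM}_F^2$ (equivalently, keep $\tfrac12\norm{\mM}_F^2$ but put a factor $2$ on the bilinear term), which is incidentally the constant appearing in the paper's own dual formulation \eqref{eq:dual_fused}. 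A correct write-up must therefore carry out the coefficient computation you postponed, conclude that the second bullet holds with $\tfrac18$ rather than $\tfrac12$, and also record that centering is required: since the $\cR\textsc{OT}$ objective is not translation invariant, what your reduction proves is an equivalence for the centered measures---a hypothesis absent from, and needed in, the statement as written.
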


\subsection{Leveraging cost-regularized OT}
In this work, we aim to incorporate additional structure to OT across spaces by leveraging the \ref{eq:ROT} formulation. Inspired by GW, we will constrain our costs to be linear and consider new regularization functions. We will see that depending on the regularization function, we can draw interesting links to GW-like problems by writing the \ref{eq:ROT} problem in its equivalent \ref{eq:QOT} form.

\textbf{Entropic regularization.} A preferred way to compute OT and GW distances in practice is adding an entropic regularization to the linear OT term \citep{cuturi2013sinkhorn, peyre2016gromov}. Given an $\varepsilon > 0$ we can follow the same procedure and add a regularization to \ref{eq:ROT}. The entropy-regularized \ref{eq:ROT} problem writes
\begin{gather}\label{eq:ROT_ent}
    \min_{c \in \cC(\cX \times \cY)} \cW_{c}^\varepsilon(\alpha, \beta) + \cR(c) \\
    \text{where} \; \, \cW_{c}^\varepsilon(\alpha, \beta) \triangleq \min_{\pi \in \Pi(\alpha, \beta)} \int c \diff \pi + \varepsilon \textsc{KL}(\pi || \alpha \otimes \beta),
\end{gather}
with $\textsc{KL}(\pi || \alpha \otimes \beta) = \int \log\br{\frac{\diff \pi}{\diff \alpha \otimes \beta}} \diff \pi$. In this case, the entropic version of \ref{eq:QOT} is simply
\begin{align}\label{eq:QOT_ent}
    \min_{\pi \in \Pi(\alpha, \beta)} \cQ(\pi) + \varepsilon \textsc{KL}(\pi || \alpha \otimes \beta).
\end{align}

\textbf{Fused $\cR\textrm{OT}$.} In some cases, we may even have an inter-space cost that can be used alongside the learned cost. This is the case in the context of the fused Gromov-Wasserstein cost \citep{vayer2020fused}. Such a cost $\tilde{c} : \cC(\cX \times \cY) \to \R$, which isn't learned, can be naturally integrated as (with $\eta \geq 0$)
\begin{gather}\label{eq:ROT_ent_fused}
    \cR OT_\varepsilon^{\tilde{c}}(\alpha, \beta) = \min_{c \in \cC(\cX \times \cY)} \cW_{c + \eta\tilde{c}}^\varepsilon(\alpha, \beta) + \cR(c).
\end{gather}
\section{Structured OT across spaces}\label{sec:struct_ot}
\subsection{Linear cost-regularized OT}

Let $\cX \subset \R^{d_x}$ and $\cY \subset \R^{d_y}$. From this point on, we consider linear costs parameterized by a matrix $\mM  \in \R^{d_y \times d_x}$: ${c_\mM (x, y) = - \ip{\mM x}{y}}$. In this case we can write \eqref{eq:ROT_ent} as
\begin{gather}\label{eq:linear_rot}
    \min_{\substack{\pi \in \Pi(\alpha, \beta) \\ \mM \in \R^{d_y \times d_x}}} \int - \ip{\mM x}{y} \diff \pi + \cR(\mM) + \varepsilon \textsc{KL}(\pi || \alpha \otimes \beta), \tag{L-$\cR \textsc{OT}_\varepsilon$}
\end{gather}
where we directly define $\cR$ on $\R^{d_y \times d_x}$ for brevity.
\begin{proposition}\label{prop:Q_linear_ROT}
The linear $\cR \textsc{OT}$ problem \eqref{eq:linear_rot} is equivalent to the $\cQ \textsc{OT}$ problem \eqref{eq:QOT_ent} with
\begin{align}
    \cQ(\pi) = - \cR^*\br{\int_{\cX \times \cY} yx^\top\diff \pi(x,y)},
\end{align}    
where $\cR^*$ is the convex conjugate of $\cR$.
\end{proposition}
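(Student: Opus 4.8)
The plan is to exploit the bilinearity of the linear cost $c_\mM(x,y) = -\ip{\mM x}{y}$ in the pair $(\pi,\mM)$, so that the joint minimization in \eqref{eq:linear_rot} collapses, after eliminating $\mM$, into exactly the convex conjugate $\cR^*$ appearing in the claimed $\cQ$.

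First I would rewrite the transport term as a Frobenius pairing. Since $\ip{\mM x}{y} = \sum_{i,j} \mM_{ij}\,(yx^\top)_{ij}$, linearity of the integral yields
\[
    \int_{\cX\times\cY} \ip{\mM x}{y}\,\diff\pi(x,y) = \ip{\mM}{Z(\pi)}, \qquad Z(\pi) \triangleq \int_{\cX\times\cY} yx^\top\,\diff\pi(x,y),
\]
where $\ip{\cdot}{\cdot}$ now denotes the Frobenius inner product on $\R^{d_y\times d_x}$ and $Z(\pi)$ is precisely the matrix appearing in the statement. Substituting into \eqref{eq:linear_rot}, the joint objective becomes $-\ip{\mM}{Z(\pi)} + \cR(\mM) + \varepsilon\,\textsc{KL}(\pi\|\alpha\otimes\beta)$, and crucially the entropic term does not depend on $\mM$.

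Second, I would exchange the two infima, which is unconditional for a joint infimum over a product set, and carry out the inner minimization over $\mM$ at $\pi$ fixed:
\[
    \min_{\mM\in\R^{d_y\times d_x}}\left(\cR(\mM) - \ip{\mM}{Z(\pi)}\right) = -\sup_{\mM\in\R^{d_y\times d_x}}\left(\ip{\mM}{Z(\pi)} - \cR(\mM)\right) = -\cR^*\!\left(Z(\pi)\right),
\]
by the very definition of the convex conjugate of $\cR$ taken with respect to the Frobenius pairing. Plugging this back in, \eqref{eq:linear_rot} reduces to $\min_{\pi\in\Pi(\alpha,\beta)} -\cR^*(Z(\pi)) + \varepsilon\,\textsc{KL}(\pi\|\alpha\otimes\beta)$, which is exactly \eqref{eq:QOT_ent} with $\cQ(\pi) = -\cR^*(Z(\pi))$. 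Because the reduction only eliminated $\mM$, whose optimizer is a function of $\pi$ through $Z(\pi)$, the two problems share the same optimal couplings, giving equivalence in the sense defined in the excerpt. As a consistency check I would verify that $\cQ$ is genuinely concave, so that labeling this a $\cQ\textsc{OT}$ instance is legitimate: $\cR^*$ is convex as a supremum of affine maps, $\pi\mapsto Z(\pi)$ is linear, hence $\cR^*\!\circ Z$ is convex and $\cQ = -\cR^*\!\circ Z$ is concave.

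The step requiring the most care is the inner minimization: one must ensure $\cR^*$ is evaluated against the same (Frobenius) duality pairing that appears in the cost term, and that $\cR$ being proper and convex guarantees $\cR^*$ is well defined with the defining supremum equal to the exact negative of the inner infimum. No regularity beyond properness and convexity of $\cR$ is needed, since the swap of infima is purely set-theoretic and requires no minimax or convexity argument. A minor subtlety worth flagging is attainment of the inner $\mM$: it is not required for equality of optimal values and optimal couplings, but if one wishes to recover the transform from a solution $\pi^\star$, it is given by any $\mM^\star\in\partial\cR^*(Z(\pi^\star))$.
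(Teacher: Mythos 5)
Your proof is correct and follows essentially the same route as the paper's: rewrite the transport term as a Frobenius pairing $\ip{\mM}{\int yx^\top \diff\pi}$, minimize out $\mM$ at fixed $\pi$, and recognize this partial minimum as $-\cR^*\br{\int yx^\top \diff\pi}$. Your added observations (concavity of $\cQ$, recovery of $\mM^\star$ via $\partial\cR^*$) are sound extras but not part of the paper's argument.
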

Since for low values of $\varepsilon$, \eqref{eq:linear_rot} is a concave minimization problem, solving it exactly is generally out of reach. However, we can approximate it using a simple alternated minimization procedure on the objective. Initializing at a $\pi_0 \in \Pi(\alpha, \beta)$, let
\begin{align}\label{eq:alt_min}
\begin{split}
    \mM_{k+1} & = \argmin_{\mM \in \R^{d_y \times d_x}}  \int - \ip{\mM x}{y} \,\mathrm{d}\pi_k + \cR(\mM) \\
    \pi_{k+1} & = \argmin_{\pi \in \Pi(\alpha, \beta)} - \ip{\mM_{k+1} x}{y} \,\mathrm{d}\pi + \varepsilon \textsc{KL}(\pi || \alpha \otimes \beta)
\end{split}
\end{align}

\begin{proposition}\label{prop:convergence_alt_min}
    Let $\cX = \left\{x_i\right\}_{i=1}^m \subset \R^{d_x}$, $\cY = \left\{y_i\right\}_{i=1}^n \subset \R^{d_y}$, $\alpha = \sum_{i=1}^m \balpha_i \delta_{x_i}$ and $\beta = \sum_{j=1}^n \bbeta_j \delta_{y_j}$. Let $\varepsilon > 0$ and $\cR$ be strongly convex. Then, any limit point of $\br{M_{k}, \pi_k}$ defined in \eqref{eq:alt_min} is a stationary point of the objective in \eqref{eq:linear_rot}. 
\end{proposition}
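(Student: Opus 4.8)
The plan is to read \eqref{eq:alt_min} as a two–block Gauss--Seidel minimization of the joint objective
\[
F(\mM, \pi) \triangleq \int -\ip{\mM x}{y}\diff\pi + \cR(\mM) + \varepsilon\,\textsc{KL}(\pi || \alpha\otimes\beta),
\]
and to exploit its structure: the coupling term is a smooth (bilinear) function of $(\mM,\pi)$, whereas the two nonsmooth pieces $\cR(\mM)$ and $\varepsilon\,\textsc{KL}(\pi||\alpha\otimes\beta)$ are \emph{separable} across blocks and \emph{strongly convex} within their own block. Concretely, for fixed $\pi_k$ the map $\mM \mapsto F(\mM,\pi_k)$ is $\mu$-strongly convex (with $\mu>0$ the modulus of $\cR$, the coupling term being linear in $\mM$), so $\mM_{k+1}$ is the unique minimizer; for fixed $\mM_{k+1}$ the map $\pi\mapsto F(\mM_{k+1},\pi)$ is strongly convex on the transportation polytope, since the Hessian of $\textsc{KL}(\cdot||\alpha\otimes\beta)$ is $\mathrm{diag}(1/\pi_{ij})\succeq \mI$ on the simplex (each $\pi_{ij}\le 1$), hence the $\pi$-block is $\varepsilon$-strongly convex and $\pi_{k+1}$ is its unique minimizer. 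I would also record boundedness up front: $\pi_k\in\Pi(\alpha,\beta)$ lives in a compact polytope, and since $\int yx^\top\diff\pi_k$ stays bounded (finite supports, $\pi_k$ a probability measure) while $\cR$ is coercive, the minimizers $\mM_{k}$ remain in a bounded set; thus $(\mM_k,\pi_k)$ lies in a compact set and limit points exist, and $F$ is bounded below.

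The first substantive step is a \emph{sufficient-decrease} argument. Each block update does not increase $F$, so $F(\mM_k,\pi_k)$ is non-increasing and, being bounded below, converges to some $F^\star$. Applying the standard inequality for a $\nu$-strongly convex function ($h(x)-h(x^+)\ge \tfrac{\nu}{2}\norm{x-x^+}^2$ at its minimizer $x^+$) to each block gives
\[
F(\mM_k,\pi_k)-F(\mM_{k+1},\pi_k)\ \ge\ \tfrac{\mu}{2}\norm{\mM_{k+1}-\mM_k}^2,
\]
\[
F(\mM_{k+1},\pi_k)-F(\mM_{k+1},\pi_{k+1})\ \ge\ \tfrac{\varepsilon}{2}\norm{\pi_{k+1}-\pi_k}^2.
\]
Summing the telescoping left-hand sides is finite because $F(\mM_k,\pi_k)$ converges, so $\norm{\mM_{k+1}-\mM_k}\to 0$ and $\norm{\pi_{k+1}-\pi_k}\to 0$.

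Next I would pass to a limit point. Take a subsequence with $(\mM_{k_j},\pi_{k_j})\to(\mM^\star,\pi^\star)$; vanishing successive differences force the shifted iterates $(\mM_{k_j+1},\pi_{k_j+1})$ to the same limit $(\mM^\star,\pi^\star)$. The block-optimality of the updates reads $F(\mM_{k_j+1},\pi_{k_j})\le F(\mM,\pi_{k_j})$ for all $\mM$, and $F(\mM_{k_j+1},\pi_{k_j+1})\le F(\mM_{k_j+1},\pi)$ for all $\pi\in\Pi(\alpha,\beta)$. Since $F$ is jointly continuous on the relevant compact domain, letting $j\to\infty$ yields $F(\mM^\star,\pi^\star)\le F(\mM,\pi^\star)$ for all $\mM$ and $F(\mM^\star,\pi^\star)\le F(\mM^\star,\pi)$ for all $\pi\in\Pi(\alpha,\beta)$; that is, $(\mM^\star,\pi^\star)$ is a partial (coordinatewise) minimum.

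Finally I would upgrade partial minimality to joint stationarity, which is where the separable structure is essential. Writing $F = g + \cR(\mM) + \varepsilon\,\textsc{KL}(\pi||\alpha\otimes\beta)$ with $g$ the smooth bilinear coupling term, the directional derivative of $F$ at $(\mM^\star,\pi^\star)$ along any feasible direction $(u,v)$ splits as $\big[\ip{\nabla_\mM g}{u}+\cR'(\mM^\star;u)\big]+\big[\ip{\nabla_\pi g}{v}+\varepsilon\,\textsc{KL}'(\pi^\star;v)\big]$, the two brackets being exactly the block directional derivatives, which are $\ge 0$ by the partial-minimum property; hence the joint directional derivative is nonnegative along every feasible direction, i.e.\ $(\mM^\star,\pi^\star)$ is a stationary point of \eqref{eq:linear_rot}. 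I expect the main obstacle to be the two bookkeeping points underpinning this chain: establishing that the successive iterate differences vanish (so that the shifted subsequence shares the limit and the optimality inequalities survive the passage to the limit), for which strong convexity of \emph{both} blocks is exactly what is needed; and justifying the decomposition of the directional derivative, which hinges on the coupling term being smooth and the nonsmooth regularizers being block-separable.
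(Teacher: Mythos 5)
Your proof is correct, but it takes a self-contained route where the paper simply delegates to a known theorem. The paper's entire proof is two sentences: it observes that, since $\cR$ is strongly convex and $\varepsilon > 0$, the block minimizer in $\mM$ (for fixed $\pi$) and the block minimizer in $\pi$ (for fixed $\mM$) are each unique, and then invokes Bertsekas' classical result on cyclic block coordinate minimization (Prop.\ 2.7.1 and Ex.\ 2.7.1 in \emph{Nonlinear Programming}), which states that under uniqueness of the block minimizers every limit point of the Gauss--Seidel iterates is a stationary point; the exercise extends the proposition to objectives with block-separable nonsmooth convex terms, which is exactly the structure here. What you did is essentially re-derive that cited theorem in this specific setting: strong convexity of each block objective gives you a quantitative sufficient-decrease inequality, hence vanishing successive differences, hence coincidence of limits along shifted subsequences, then block optimality passes to the limit by continuity, and the block-separability of $\cR(\mM)$ and $\varepsilon\,\textsc{KL}(\pi\,\|\,\alpha\otimes\beta)$ lets you split the joint directional derivative into the two nonnegative block directional derivatives. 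Your argument is in fact somewhat stronger than the hypotheses of the cited theorem require (Bertsekas needs only uniqueness of block minimizers, not strong convexity, and his proof avoids sufficient decrease by a compactness/uniqueness contradiction argument), and it makes explicit exactly where $\varepsilon>0$ and strong convexity of $\cR$ enter; the price is length, plus two small points you treat informally: the $\varepsilon$-strong convexity of the KL term via its Hessian should be stated on the relative interior and extended to the closed polytope by continuity, and the possible non-differentiability of KL at couplings with zero entries is harmless only because the partial-minimum inequality directly forces the $\pi$-block directional derivatives to be nonnegative (as you implicitly use), or alternatively because entropic block minimizers with uniformly bounded costs have entries bounded away from zero, so limit points lie in the relative interior.
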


In practice, the $\pi$-step can be carried out efficiently using Sinkhorn's algorithm \citep{cuturi2013sinkhorn} with the cost $c_{\mM_k}$. The $\mM$-step is in general trickier since it might involve a difficult optimization problem depending on the regularization function $\cR$.

\subsection{Proximal $\cR\textsc{OT}$}\label{subsec:prox-rot}

We consider regularizations of the form ${\cR(\mM) = \frac{1}{2}\norm{\mM}^2_F + \lambda g(\mM)}$. We aim to solve

\begin{align}
    \min_{\substack{\pi \in \Pi(\alpha, \beta) \\ \mM \in \R^{d_y \times d_x}}} - & \int \ip{\mM x}{y}\,\mathrm{d}\pi + \frac{\norm{\mM}_F^2}{2} + \lambda g(\mM) \label{eq:reg-OT_g} \\
    & + \varepsilon \textsc{KL}(\pi || \alpha \otimes \beta),
\end{align}
where $g : \R^{d_y \times d_x} \to \R$. Note that when $\lambda = 0$, per Prop. \ref{prop:GW_equivalence_ROT}, \eqref{eq:reg-OT_g} is equivalent to \eqref{eq:GW-IP}. The minimization with respect to $\mM$ in \eqref{eq:reg-OT_g} is a well-studied problem in the optimization and machine learning literature \citep{bauschke2011convex}. 
\begin{lemma}\label{lem:prox_M}
    For any $\pi$, the solution of \eqref{eq:reg-OT_g} in $\mM$ is
    \begin{align}
        & \mM(\pi) \triangleq \prox_{\lambda g}\br{\int_{\cX \times \cY} yx^\top \,\mathrm{d}\pi},
    \end{align}
    where $\prox_{h}(x) \triangleq \argmin_{z \in \R^d} \frac{1}{2}\sqn{x - z} + h(z)$.
\end{lemma}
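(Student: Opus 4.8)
The plan is to eliminate the entropy term (which is constant in $\mM$) and then recognize the remaining $\mM$-subproblem as a completed square whose minimizer is, by definition, a proximal step. First I would fix $\pi$ and discard the term $\varepsilon \textsc{KL}(\pi \| \alpha \otimes \beta)$ in \eqref{eq:reg-OT_g}, since it does not involve $\mM$; the objective to minimize over $\mM$ is therefore
\[
-\int \ip{\mM x}{y}\diff\pi + \frac{\norm{\mM}_F^2}{2} + \lambda g(\mM).
\]

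The key identity I would establish is that the linear term is a Frobenius inner product. For each $(x,y)$ one has $\ip{\mM x}{y} = y^\top \mM x = \tr\br{\mM^\top y x^\top} = \ip{\mM}{y x^\top}_F$, where $\ip{A}{B}_F = \tr\br{A^\top B}$ denotes the Frobenius inner product (with induced norm $\norm{\cdot}_F$). Integrating against $\pi$ and using linearity gives $\int \ip{\mM x}{y}\diff\pi = \ip{\mM}{P}_F$, with $P \triangleq \int_{\cX\times\cY} y x^\top \diff\pi$. The subproblem thus reads $\argmin_{\mM}\; \tfrac12\norm{\mM}_F^2 - \ip{\mM}{P}_F + \lambda g(\mM)$. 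Completing the square, $\tfrac12\norm{\mM}_F^2 - \ip{\mM}{P}_F = \tfrac12\norm{\mM - P}_F^2 - \tfrac12\norm{P}_F^2$, so up to the additive constant $-\tfrac12\norm{P}_F^2$ (independent of $\mM$) the minimization becomes $\argmin_{\mM}\; \tfrac12\norm{\mM - P}_F^2 + \lambda g(\mM)$, which is exactly $\prox_{\lambda g}(P)$ per the definition recalled in the statement. Substituting back $P = \int y x^\top \diff\pi$ yields the claimed $\mM(\pi) = \prox_{\lambda g}\br{\int_{\cX\times\cY} y x^\top \diff\pi}$.

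No step is genuinely hard here; the only points worth care are the bookkeeping in the trace/Frobenius identity and ensuring the $\argmin$ is attained and is a singleton. The latter follows because $\tfrac12\norm{\mM - P}_F^2 + \lambda g(\mM)$ is strongly convex whenever $g$ is convex, proper and lower semicontinuous (which is consistent with the convexity of $\cR$ assumed throughout), so the proximal map is well defined and its value is unique. I would state this convexity requirement on $g$ explicitly so that $\prox_{\lambda g}$ is guaranteed to be single-valued.
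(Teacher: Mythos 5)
Your proof is correct and follows essentially the same route as the paper's: rewrite the linear term as the Frobenius inner product $\ip{\mM}{\int yx^\top \diff\pi}_F$, complete the square by adding the constant $\tfrac12\norm{\int yx^\top \diff\pi}_F^2$, and identify the resulting minimization as $\prox_{\lambda g}$. Your additional remarks (the explicit trace identity and the strong-convexity argument ensuring the prox is single-valued) are sound refinements of the same argument, not a different approach.
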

The alternated minimization scheme \eqref{eq:alt_min} becomes:
\begin{mdframed}
\begin{align}
    \mM_{k+1} = \prox_{\lambda g}& \br{\int_{\cX \times \cY} yx^\top \diff \pi_k} \label{alg:Prox-ROT} \tag{Prox-$\cR\textsc{OT}$} \\
    \pi_{k+1} = \argmin_{\pi \in \Pi(\alpha, \beta)} & - \int_{\cX \times \cY} \ip{\mM_{k+1}x}{y}\diff \pi_k \\
    & + \varepsilon \textsc{KL}(\pi_k || \alpha \otimes \beta)
\end{align}
\end{mdframed}
As a result, the $\mM$-step in \ref{alg:Prox-ROT} can be efficiently implemented when the proximal operator of $g$ can be computed in closed form.

\textbf{Statistical benefits of sparsity.} Since OT famously suffers from the curse of dimensionality -- the sample complexity for estimating OT maps is $\cO(n^{-\sfrac{1}{d}})$ \citep{niles2022estimation, pooladian2021entropic} -- the main factor we consider in choosing the regularization, beyond computational efficiency, is using inductive biases to reduce \textit{effective dimension} of the OT problem.

\textbf{$\ell_1$ and $\ell_{1,2}$ regularizations.} Among the choices of $g$ where the $\mM$-step can be implemented efficiently are the $\ell_1$ and and $\ell_{1,2}$ regularizations, where $\norm{\mM}_1 = \sum_{i=1}^{d_x} \sum_{j=1}^{d_y} \abs{\mM_{ij}} \; \text{and} \; \norm{\mM}_{1, 2} = \sum_{i=1}^{d_x} \norm{\mM_{:i}}_2$. Their proximal operators are given by \citep{chierchia2023proximity}
\begin{align}
    \prox_{\lambda \norm{\cdot}_1}\br{\mM}_{ij} & = \sgn(\mM_{ij})\br{\abs{\mM_{ij}} - \lambda}_+\\
    \prox_{\lambda \norm{\cdot}_{1,2}}\br{\mM}_{:i} & = \br{1 - \frac{\lambda}{\max\left\{\norm{\mM_{:i}}\, , \lambda \right\}}}\mM_{:i},
\end{align}
where $(a)_+ = \max(a, 0)$ and $\mA_{:i}$ denotes the $i$th column of $\mA$. Intuitively, the reason behind using $\ell_1$ regularization is to select the features in the spaces (or point clouds) $\cX$ and $\cY$ that are the most helpful in aligning those spaces. Similarly, using $\ell_{1,2}$ regularization, we can discard features in one of the spaces alone. 

This is best seen by examining the iterations of \ref{alg:Prox-ROT} in each of these cases: For $\ell_1$, the prox operator in the $\mM$ iteration results in a matrix $\mM_{k+1}$ where some entries $\mM_{k+1}(i,j)$ can be set to $0$, and as a result the update of the OT plan $\pi_{k+1}$ is oblivious to the corresponding features $x_i$ and $y_j$; similarly, for $\ell_{1, 2}$ iterations, the prox sets some columns of $\mM_{:i}$ to $0$, and as a result the update of the OT plan doesn't use the corresponding dimensions $x_i$. In both cases, \ref{alg:Prox-ROT} alternates between a feature selection step and a linear OT step.

Rewriting \eqref{eq:reg-OT_g} in its \ref{eq:QOT} form, we show that these regularizations result in problems that are closely related to the inner-product GW problem \eqref{eq:GW-IP}.

\begin{proposition}\label{prop:l1_l12}
    Consider Problem \eqref{eq:reg-OT_g} with $\varepsilon=0$.

    \textbullet\ \textbf{$\ell_{1, 2}$-regularization:} with $g(M) = \norm{M}_{1,2}$, \eqref{eq:reg-OT_g} with ${\varepsilon=0}$ is equivalent to \ref{eq:QOT} with
        \begin{align}\label{eq:GW_subset_X}
        & \cQ(\pi) = \int_{\br{\cX \times \cY}^2}  \br{\ip{x_{I_{\pi}}}{x_{I_\pi}^\prime} -  \ip{y}{y^\prime}}^2 \diff \pi \diff \pi - \frac{\lambda^2}{2} |I_\pi| \\ 
        & - \int_\cX \ip{x_{I_\pi}}{x_{I_\pi}^\prime}^2 \diff \alpha + \lambda \sum_{i \in I_\pi} \norm{\int_{\cX \times \cY} x_i y\, \,\mathrm{d}\pi(x, y)},
        \end{align}
        where $I_\pi = \left\{i \,: \; \norm{\int_{\cX \times \cY}x_i y \diff \pi} > \lambda \right\}$. Moreover, with $\br{\pi^\star, M^\star}$ a solution to \eqref{eq:reg-OT_g}, \eqref{eq:reg-OT_g} is equivalent to
    \begin{align}
        \min_{\pi \in \Pi(\alpha, \beta)} \int_{\br{\cX \times \cY}^2} \br{ \ip{x_{I_{\pi^\star}}}{x_{I_{\pi^\star}}^\prime} - \ip{y}{y^\prime}}^2 \diff \pi^\star \diff \pi.
    \end{align}
    \textbullet\ \textbf{$\ell_1$-regularization:} With $g(M) = \norm{M}_1$, \eqref{eq:reg-OT_g} with ${\varepsilon=0}$ is equivalent to \ref{eq:QOT} with
    \begin{align} \label{eq:GW_subset_XY}
        & \cQ(\pi) =  \int_{\br{\cX \times \cY}^2} \sum_{(i, j) \in I_{\pi}} \br{x_i x_i^\prime - y_j y_j^\prime}^2 \diff \pi \diff \pi - \frac{\lambda^2}{2}|I_\pi| \\
        & - \sum_{(i,j)\in I_\pi}\br{\int_\cX x_i^2 \diff \alpha + \int_\cY y_j^2 \diff \beta  - \lambda \abs{\int_{\cX \times \cY} y_j x_i \,\mathrm{d}\pi(x, y)}},
    \end{align}
    where $I_{\pi} = \left\{(i, j) \,: \; \int_{\cX \times \cY} x_iy_j \,\mathrm{d}\pi > \lambda \right\}$. Moreover, with $\br{\pi^\star, M^\star}$ a solution to \eqref{eq:reg-OT_g}, \eqref{eq:reg-OT_g} is equivalent to
    \begin{align}
        \min_{\pi \in \Pi(\alpha, \beta)} \int_{\br{\cX \times \cY}^2} \sum_{(i, j) \in I_{\pi^\star}} \br{x_i x_i^\prime - y_j y_j^\prime}^2 \diff \pi^\star \diff \pi.
    \end{align}
\end{proposition}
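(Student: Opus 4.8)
The plan is to eliminate $\mM$ first and then read off the concave objective, exactly as prescribed by Proposition~\ref{prop:Q_linear_ROT} and Lemma~\ref{lem:prox_M}. Setting $\varepsilon = 0$ and writing $P(\pi) = \int_{\cX\times\cY} yx^\top\diff\pi$, the inner minimization $\min_\mM \br{\tfrac12\norm{\mM}_F^2 + \lambda g(\mM) - \ip{\mM}{P(\pi)}}$ is separable: $g=\norm{\cdot}_{1,2}$ decomposes across columns and $g=\norm{\cdot}_1$ across entries, so the minimizer is the soft-thresholding prox already recorded in the excerpt. Evaluating $\cQ(\pi) = -\cR^*(P(\pi))$ column-by-column (resp. entry-by-entry) then gives the closed forms $-\tfrac12\sum_{i\in I_\pi}\br{\norm{P_{:i}(\pi)} - \lambda}^2$ for $\ell_{1,2}$ and $-\tfrac12\sum_{(i,j)\in I_\pi}\br{|P_{ji}(\pi)| - \lambda}^2$ for $\ell_1$, where in each case $I_\pi$ is precisely the surviving support of the prox, matching the index sets in the statement.

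Next I would convert each closed form into a Gromov--Wasserstein-type objective restricted to the selected coordinates. The key identity, the same one underlying Proposition~\ref{prop:GW_equivalence_ROT}, is that for any coordinate subset $S$ one has $\int\ip{x_S}{x_S^\prime}\ip{y}{y^\prime}\diff\pi\diff\pi = \sum_{i\in S}\norm{\int x_i y\diff\pi}^2$, so the cross term in the expansion of $\int\br{\ip{x_S}{x_S^\prime} - \ip{y}{y^\prime}}^2\diff\pi\diff\pi$ is exactly the quadratic piece produced by expanding the thresholded squares. Substituting $S=I_\pi$ and collecting the remaining pieces, namely the $x$-only and $y$-only marginal terms (constant in $\pi$ once the active set is fixed), the linear-in-$\norm{P_{:i}}$ correction created by the shrinkage, and the $-\tfrac{\lambda^2}{2}|I_\pi|$ count, reproduces the displayed $\cQ$; the $\ell_1$ case is identical with the entrywise kernel $\br{x_i x_i^\prime - y_j y_j^\prime}^2$ replacing the inner product. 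This step is essentially bookkeeping, the only caveat being that the expansion is valid pointwise, since $I_\pi$ (and hence which terms count as constants) varies with $\pi$.

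For the ``moreover'' statements I would invoke first-order optimality of the concave minimization. Since $\cR^*$ is convex and $\pi\mapsto P(\pi)$ is affine, $\cQ = -\cR^*\circ P$ is concave on $\Pi(\alpha,\beta)$, so at a global minimizer $\pi^\star$ the supergradient inequality forces $\pi^\star$ to also minimize the linearization $\pi\mapsto\ip{\nabla\cQ(\pi^\star)}{\pi}$ over the coupling polytope. Using $\nabla\cR^*(P) = \prox_{\lambda g}(P) = \mM^\star$, this linearization is the linear OT problem with cost $c_{\mM^\star}$, which only depends on the selected coordinates $I_{\pi^\star}$; re-expressing that cost through the same cross-term identity gives the stated problem $\min_\pi\int\br{\ip{x_{I_{\pi^\star}}}{x_{I_{\pi^\star}}^\prime} - \ip{y}{y^\prime}}^2\diff\pi^\star\diff\pi$, in which one copy of $\pi$ is frozen at $\pi^\star$, making the functional linear in the free variable.

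The main obstacle will be the non-smoothness introduced by the active-set map $\pi\mapsto I_\pi$: because $\cQ$ is only piecewise smooth, I must either argue on the region where the active set is locally constant or, more cleanly, work directly with the subdifferential of $\cR^*$ (equivalently the prox) so that the linearization step is justified without differentiating through $I_\pi$. The other delicate point is keeping the surviving feature weights straight when passing from $c_{\mM^\star}$ to the symmetric GW-IP form, since each retained column enters with the shrinkage factor $1 - \lambda/\norm{P_{:i}(\pi^\star)}$.
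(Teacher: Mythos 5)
Your proposal is correct and follows essentially the same route as the paper's proof: eliminate $\mM$ via Lemma~\ref{lem:prox_M}, evaluate the partially minimized objective in closed form, expand it into the GW-type objective using the cross-term identity $\int \ip{x_S}{x_S^\prime}\ip{y}{y^\prime}\diff\pi\diff\pi = \sum_{i\in S}\sqn{\int x_i y \diff\pi}$, and obtain the ``moreover'' claims by freezing $\mM$ at $\mM(\pi^\star)$, which turns the problem into linear OT with cost $c_{\mM^\star}$ and hence into the stated objective with one copy of $\pi$ frozen. The only differences are minor: you justify the frozen-$\mM$ step via concavity of $\cQ$ and $\nabla \cR^* = \prox_{\lambda g}$ (which, since $\cR$ is strongly convex, also resolves your own worry about differentiating through $I_\pi$ --- $\cR^*$ is $C^1$), whereas the paper argues directly through the joint minimizer $\br{\pi^\star, \mM(\pi^\star)}$; and your intermediate value $-\tfrac12\sum_{(i,j)\in I_\pi}\br{\abs{\int x_i y_j \diff\pi} - \lambda}^2$ is in fact the more careful one, the paper's corresponding $\ell_1$ display missing the factor $\tfrac12$ on the quadratic term.
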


\textbf{Nuclear and Rank regularizations.} Two other possible choices of regularizations are the nuclear norm and rank regularizations.  Let $\mM = \mU^\top\mSigma \mV$ be an SVD of $\mM$, where $\mSigma$ contains the vector of singular vectors in decreasing order $\sigma = \br{\sigma_i}_{i \in [d_x]}$ of $\mM$. The nuclear norm of $\mM$ is defined as $\norm{\mM}_* = \norm{\sigma}_1$. The proximal operator of $\norm{\cdot}_*$ and $\rk(\cdot)$ are respectively
\begin{align}
    \prox_{\lambda \norm{\cdot}_*}(\mM) & = \mU^\top \prox_{\lambda \norm{\cdot}_1}\br{\sigma} \mV \\
    \prox_{\lambda \rk}(\mM) & = \mU^\top \prox_{\lambda \norm{\cdot}_0}\br{\sigma} \mV,
\end{align}
where $\prox_{\lambda \norm{\cdot}_0}(\sigma)_i = \sigma_i \mathbbm{1}_{\left\{\sigma_i^2 > 2\lambda\right\}}$. Consider a solution $(\mM^\star, \pi^\star)$ of \eqref{eq:reg-OT_g} with the nuclear norm. Then $\mM^\star = \prox_{\lambda \norm{\cdot}_*}\br{\int yx^\top \diff \pi^\star}$. Let $\lambda$ be large enough so that $\rk(\mM^\star) = r > 0$. Denote by ${\tilde{\mU} = \mU_{:, :r}}$  (resp. ${\tilde{\mV} = \tilde{\sigma} \odot \mV_{:, :r}}$) the restriction of $\mU$ (resp. $\mV$) to its first $r$ lines (resp. columns multiplied elementwise with $\tilde{\sigma}$, the restriction of $\sigma$ to its first $r$ elements). Then, we can write the cost of the linearized problem as $c_{\mM^\star}(x, y) = - \ip{\tilde{\mV}x}{\tilde{\mU}y}$. $\tilde{\mV}x, \tilde{\mU}y \in \R^{r}$, thus the nuclear norm allows to similarly reduce the effective dimension of the problem from potential large $d_x, d_y$ to a small $r$ determined by the magnitude of the regularization $\lambda$.

Note that since computing the SVD at each iteration of \ref{alg:Prox-ROT} can be costly when $d_x$ and $d_y$ are large, we suggest in extremely high-dimensional problems to directly parametrize $\mM$ as a low-rank matrix $\mM = \mM_2^\top \mM_1$, where $r \ll \min(d_x, d_y)$, and use the alternated minimization scheme \eqref{eq:alt_min}. We write the corresponding iterations in the Appendix. Despite making the problem non-convex in $\mM$, we see in our applications that the resulting method works well.

\section{Entropic Monge maps for $\cR\textsc{OT}$}\label{sec:maps}
Optimal transport \citep{monge1781memoire} seeks a map ${T : \cX \to \cY}$ that minimizes the average displacement cost $c(x, T(x))$ between two measures $\alpha$ and $\beta$, i.e. finding an OT coupling of the form $\pi^\star = \br{\textrm{id}, T}_\sharp \alpha$. Although such a (so-called) Monge map does not exist for all costs, \cite{brenier1991polar} showed that in the case where $c(x, y) = \tfrac{1}{2}\sqn{x - y}$, if $\alpha$ has a density, the optimal map exists, is unique, and can be written as the gradient of a convex function.
A convenient approach to approximate that Monge map can be found in entropic regularization. \citet{pooladian2021entropic} showed that one can build a map $T_\varepsilon$ using entropy regularized transport (with regularization strength $\varepsilon>0$) such that $T_\varepsilon \xrightarrow[]{\varepsilon \to 0} T$.
We show that there always exists a Monge map for linear $\cR \textsc{OT}$, and propose a formulation for an entropic map for this problem.

\subsection{Monge maps for linear $\cR \textsc{OT}$}
\cite{dumont2022existence} recently showed that there exists a Monge map for the \ref{eq:GW-IP} problem, which, as seen in Prop. \ref{prop:GW_equivalence_ROT} is a special case of linear $\cR \textsc{OT}$. Here, we extend their reasoning to show the existence of Monge maps for the general linear $\cR \textsc{OT}$ problem.
\begin{proposition}\label{prop:monge_map_linear_costs}
    Let $\alpha \in \cP(\R^{d_x})$ and $\beta \in \cP(\R^{d_y})$ with compact supports and $d_x \geq d_y$. Assume that $\alpha \ll \cL^{d_x}$, the Lebesgue measure. Then there exists a map $T : \cX \to \cY$ such that $\pi^\star = \br{\textrm{id}, T}_\sharp \alpha$ and $\br{\pi^\star, \mM(\pi^\star)}$ is optimal for \eqref{eq:linear_rot} with $\varepsilon=0$.
\end{proposition}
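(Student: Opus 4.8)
The plan is to follow the two-stage structure suggested by Proposition \ref{prop:GW_equivalence_ROT} and the reasoning of \cite{dumont2022existence}: first produce an optimal pair $\br{\pi^\star, \mM^\star}$ with $\mM^\star = \mM(\pi^\star)$, then exhibit a Monge map for the \emph{linear} OT problem obtained by freezing $\mM^\star$. For existence of the pair, Proposition \ref{prop:Q_linear_ROT} turns \eqref{eq:linear_rot} with $\varepsilon=0$ into $\min_{\pi\in\Pi(\alpha,\beta)} -\cR^*\br{\int yx^\top \diff\pi}$. Since $\alpha,\beta$ have compact supports, $\Pi(\alpha,\beta)$ is weakly compact, $\pi\mapsto\int yx^\top\diff\pi$ is weakly continuous, and $\cR^*$ is continuous (the $\tfrac12\norm{\cdot}_F^2$ term makes $\cR$ strongly convex); hence a minimizer $\pi^\star$ exists and $\mM^\star \triangleq \mM(\pi^\star) = \prox_{\lambda g}\br{\int yx^\top\diff\pi^\star}$ is well defined. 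For this fixed $\mM^\star$, the coupling $\pi^\star$ is optimal for the linear OT problem with cost $c_{\mM^\star}(x,y) = -\ip{\mM^\star x}{y}$.

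Next I would analyze the Kantorovich potentials $f,g$ for $c_{\mM^\star}$. The $c$-transform gives $f(x) = -G(\mM^\star x)$ with $G(u) \triangleq \sup_y \br{\ip{u}{y} + g(y)}$ convex and finite on $\R^{d_y}$ (compact supports). Thus $f = -G\circ\mM^\star$ is concave, hence differentiable $\alpha$-a.e. because $\alpha\ll\cL^{d_x}$. On $\mathrm{supp}(\pi^\star)$ one has $y\in\partial G(\mM^\star x)$, and differentiability of $f$ at $x$ forces $(\mM^\star)^\top y$ to be uniquely determined by $x$, since $\partial(-f)(x) = (\mM^\star)^\top\,\partial G(\mM^\star x)$ must be a singleton.

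When $\mathrm{rank}(\mM^\star) = d_y$ (possible only because $d_x\geq d_y$) this closes the argument: $(\mM^\star)^\top$ is injective, so $(\mM^\star)^\top y$ determines $y$; moreover $\mM^\star_\sharp\alpha \ll \cL^{d_y}$ as a full-rank pushforward of an a.c.\ measure, so $G$ is differentiable $\mM^\star_\sharp\alpha$-a.e. Then $T(x) \triangleq \nabla G(\mM^\star x)$ is single-valued and $\pi^\star = \br{\mathrm{id}, T}_\sharp\alpha$, the desired Monge map.

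The main obstacle is that $\mM^\star$ need not have rank $d_y$; indeed the nuclear/rank regularizers of Section \ref{sec:struct_ot} deliberately make it low rank. Writing $V = \mathrm{range}(\mM^\star)$ with $\dim V = r < d_y$, the previous step only pins down $P_V y$, leaving the $V^\perp$-component free; equivalently $c_{\mM^\star}$ depends on $y$ only through $P_V y$, so the cost of $\pi^\star$ is unchanged by how mass is arranged within the fibers $\br{P_V y = \mathrm{const}}$. I would resolve this by noting that the $V$-component is a genuine Brenier map $P_V y = \nabla G(\mM^\star x)$ between $\mM^\star_\sharp\alpha$ (a.c.\ on $V\cong\R^r$, as $\mM^\star$ is surjective onto $V$ and $\alpha$ is a.c.) and $(P_V)_\sharp\beta$, and then filling in the $V^\perp$-components by a fibered construction. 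Disintegrating $\alpha$ over $\mM^\star$ and $\beta$ over $P_V$, the conditionals of $\alpha$ live on fibers of dimension $d_x - r \geq 1$ and are atomless, so each can be pushed onto the matching $\beta$-conditional by a map; a measurable-selection argument glues these into a global $T$ with $T_\sharp\alpha=\beta$ and $P_V\circ T = \nabla G\circ\mM^\star$, and since the fiber choice does not affect the cost, $\br{\mathrm{id},T}_\sharp\alpha$ stays optimal. I expect the delicate points to be the measurability of the fiber selection and checking that the glued coupling simultaneously has the correct marginals and remains cost-optimal.
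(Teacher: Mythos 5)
Your proposal is correct, and its skeleton coincides with the paper's: fix an optimal pair $(\pi^\star,\mM^\star)$, observe that $\pi^\star$ then solves the \emph{linear} OT problem with the frozen cost $c_{\mM^\star}(x,y)=-\ip{\mM^\star x}{y}$, and exhibit a Monge map that is optimal for that cost. The difference is in how the last step is handled. The paper's proof is two lines: it invokes \cite[Theorem 4]{dumont2022existence}, which asserts that for \emph{every} $\mM\in\R^{d_y\times d_x}$ (full rank or not), under the stated assumptions on $\alpha$ and $\beta$, a Monge map exists for $c_{\mM}$. You instead re-derive that result: the $c$-transform argument giving $f=-G\circ\mM^\star$ with $G$ convex and finite, $\alpha$-a.e.\ differentiability from $\alpha\ll\cL^{d_x}$, and $T=\nabla G\circ\mM^\star$ when $\rk\mM^\star=d_y$; then, in the rank-deficient case, the Brenier map on $V=\mathrm{range}(\mM^\star)$ between $\mM^\star_\sharp\alpha$ and $(P_V)_\sharp\beta$ followed by a fiber-filling construction. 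That low-rank construction is precisely the proof strategy behind the cited theorem, so in effect you have unpacked the paper's black box. What each route buys: the paper's is short and delegates the genuinely delicate measure theory to the reference; yours is self-contained and additionally supplies something the paper silently assumes, namely existence of an optimal pair, which you obtain from weak compactness of $\Pi(\alpha,\beta)$ and weak continuity of $\pi\mapsto-\cR^*\br{\int yx^\top\diff\pi}$ via Proposition \ref{prop:Q_linear_ROT}. To make your version complete, two points you flag would need to be finished: (i) atomlessness of the conditionals of $\alpha$ on the fibers of $\nabla G\circ\mM^\star$ --- this follows by first disintegrating $\alpha$ along the linear map $\mM^\star$ (Fubini gives absolutely continuous, hence atomless, conditionals on the $(d_x-r)$-dimensional affine fibers, and $d_x-r\geq 1$ since $r<d_y\leq d_x$), then noting that a mixture of atomless measures is atomless; and (ii) the measurable selection of the fiber maps gluing into a single $T$. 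Both are real technical steps, and they are exactly what \cite[Theorem 4]{dumont2022existence} packages; citing it at that point would legitimately collapse your argument to the paper's.
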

\begin{proof}
Let $(\mM^\star, \pi^\star)$ be optimal for \eqref{eq:linear_rot} with $\varepsilon=0$. Then $\pi^\star$ is a solution to
\begin{align}
    \min_{\pi \in \Pi(\alpha, \beta)} \int_{\cX \times \cY} -\ip{\mM^\star x}{y} \diff \pi(x,y),
\end{align}
and \cite[Theorem 4]{dumont2022existence} showed that for all $\mM \in \R^{d_y \times d_x}$, under the assumptions of Prop. \ref{prop:monge_map_linear_costs}, there exists a Monge map for the cost ${c_{\mM}(x,y) = - \ip{\mM x}{y}}$ between $\alpha$ and $\beta$.
\end{proof}
Now that the existence of a Monge map has been established, a natural question is whether, and how, we can approximate it using entropy-regularization. In the following sections, we define an entropic map for \ref{eq:linear_rot} and show its convergence to a Monge map for this problem under suitable assumptions. Then, we discuss its convergence for the sparse and low-rank regularizations we considered in Section \ref{subsec:prox-rot}.

\subsection{Entropic Monge maps for linear $\cR\textsc{OT}$}
\begin{definition}[Entropic map for \ref{eq:linear_rot}]\label{def:ent_map}
Let $\br{\pi_\varepsilon^\star, M_\varepsilon^\star}$ be a solution of \ref{eq:linear_rot}. Let $\varepsilon^\prime > 0$. Define
\begin{align}\label{eq:ent-map}
    T_{\varepsilon, \varepsilon^\prime}(x) = \frac{\int y \exp{\br{\br{g_{\varepsilon, \varepsilon^\prime}(y) + \ip{M_\varepsilon^\star x}{y}}/\varepsilon^\prime}}d\beta(y)}{\int \exp{\br{\br{g_{\varepsilon, \varepsilon^\star}(y) + \ip{M_\varepsilon^\star x}{y}}/\varepsilon^\prime}}d\beta(y)}.
\end{align}
Here, $(f_{\varepsilon, \varepsilon^\prime}, g_{\varepsilon, \varepsilon^\prime})$ are Sinkhorn potentials for the inner product cost between ${M_\varepsilon^\star}_\sharp \alpha$ and $\beta$ with an $\varepsilon^\prime$ entropic regularization: They are a solution of the problem
\begin{gather}
    \max_{(f, g) \in \cC(X) \times \cC(Y)} \int_{X}f(x) d{\mM_\varepsilon^\star}_\sharp \alpha(x) + \int_{Y}g(y) d\beta(y) \\
   - \varepsilon^{\prime} \int_{X \times Y} \exp\br{\frac{f(x) + g(y) + \ip{x}{y}}{\varepsilon^\prime}} \diff {\mM_\varepsilon^\star}_\sharp \alpha \otimes \beta(x,y).
\end{gather}
\end{definition}

\subsubsection{Computing the entropic map in practice} 
An important point in the theoretical definition of the entropic map \eqref{eq:ent-map} is to use two different regularization parameters $\varepsilon, \varepsilon^\prime > 0$. In practice (as we do in our applications), we can simplify the definition and use a single $\varepsilon$. Given samples $\left\{x_i\right\}_{i=1}^n \sim \alpha$ and $\left\{y_i\right\}_{i=1}^n \sim \beta$:\\
\textbullet\ Find a primal solution $(\mpi^\star, \mM_\varepsilon^\star)$ and a dual solution $\br{\mathbf{f}, \mathbf{g}}$ of the discrete \ref{eq:linear_rot} problem with $\hat{\alpha} = \sum_{i=1}^n \balpha_i \delta_{x_i}$ and $\hat{\beta} = \sum_{j=1}^m \bbeta_j \delta_{y_j}$ using \eqref{eq:alt_min}. \\
\textbullet\ Define the entropic map as
\begin{align}\label{eq:ent_map_practical}
    T_{\varepsilon}(x) = \frac{\sum_{j=1}^n y_j \exp{\br{\br{\mathbf{g}_j + \ip{\mM_\varepsilon^\star x}{y_j}}/\varepsilon}}}{\sum_{k=1}^n \exp{\br{\br{\mathbf{g}_k + \ip{\mM_\varepsilon^\star x}{y_k}}/\varepsilon}}}.
\end{align}

\begin{figure}[t]
\captionsetup{font=footnotesize}
\centering
\includegraphics[width=\linewidth]{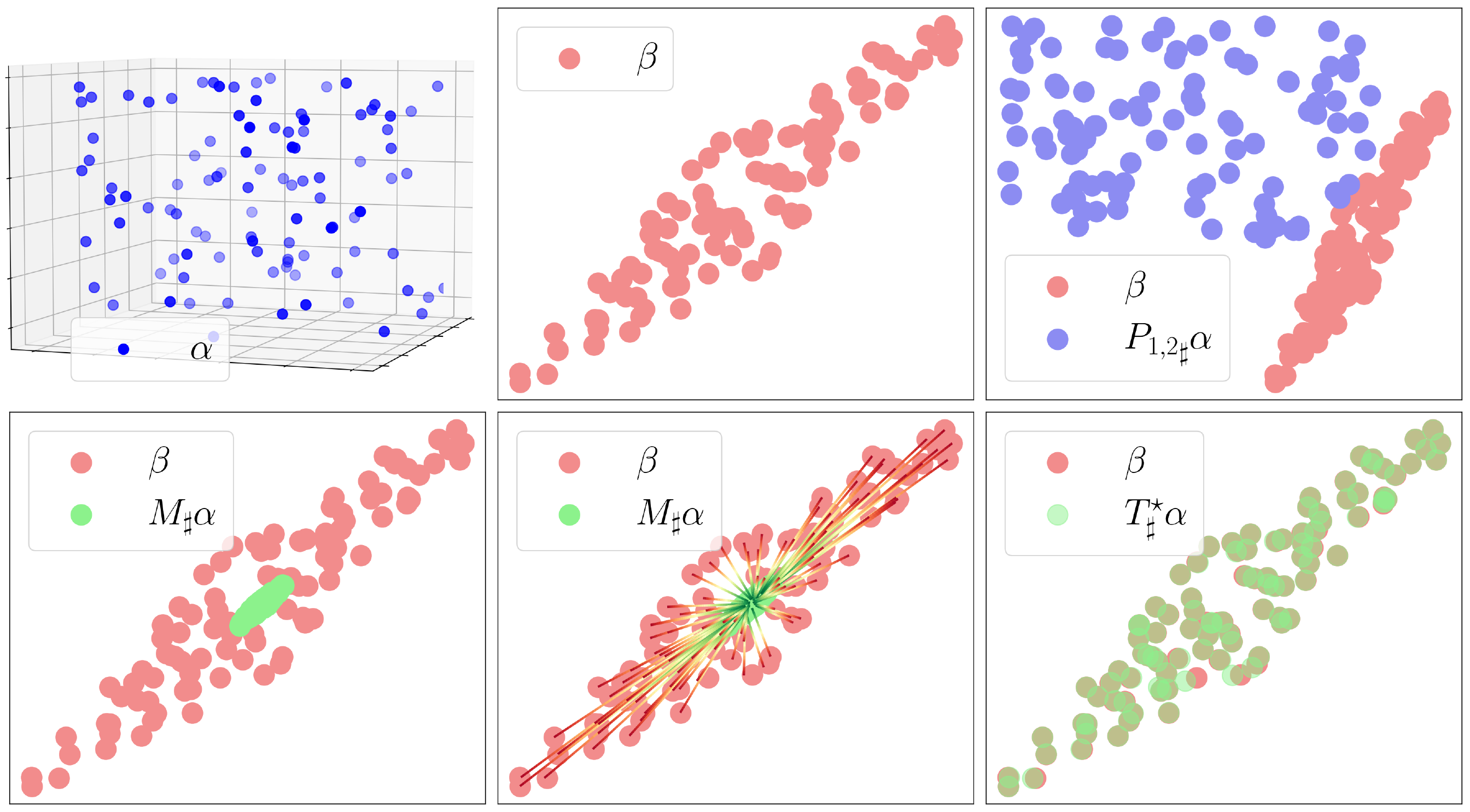}
\caption{Illustration of the entropic map \eqref{eq:ent_map_practical} for \ref{eq:GW-IP} from a 3D to a 2D point cloud with $\varepsilon \approx 0$.}
\label{fig:entropic_map}
\end{figure}

\textbf{Illustration.} In Figure \ref{fig:entropic_map}, we showcase the action of the entropic map \eqref{eq:ent_map_practical} when $\varepsilon \approx 0$. We consider two point clouds $\alpha$ and $\beta$ on $\R^{3}$ and $\R^{2}$. The entropic map implicitly acts in two steps: it first transforms the 3D point cloud $\alpha$ into a 2D point cloud $\mM_\sharp \alpha$ which is aligned with $\beta$. It then maps the points from $\mM_\sharp \alpha$ to $\beta$ using the inner-product entropic map.

\subsubsection{Convergence theory}
\textbf{The case where $\mM^\star$ is full rank.} Assume that $d_x \geq d_y$ and consider the case where $rk(\mM^\star) = d_y$. \cite{dumont2022existence} and \cite{vayer2020contribution} showed that if $\mM^\star$ is full rank, then a Monge map for \ref{eq:GW-IP} is given by $T = - \nabla f \circ \mM^\star$, where $f$ is a Kantorovitch potential for $\cW^\varepsilon_{\ip{\cdot}{\cdot}}\br{M_\sharp \alpha, \beta}$. The entropic map for \ref{eq:linear_rot} verifies an analog result.
\begin{lemma}\label{prop:ent_map_sinkhorn_potentials}
    Let $(f_{\varepsilon, \varepsilon^\prime}, g_{\varepsilon, \varepsilon^\prime})$ be Sinkhorn potentials for $\cW^{\varepsilon^\prime}_{\ip{\cdot}{\cdot}}({\mM_\varepsilon^\star}_\sharp \alpha, \beta)$. Then, we can rewrite \eqref{eq:ent-map} as
    \begin{align}
        T_{\varepsilon, \varepsilon^\prime} = - \nabla f_{\varepsilon, \varepsilon^\prime} \circ \mM_\varepsilon^\star
    \end{align}
\end{lemma}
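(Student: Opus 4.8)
The plan is to read the identity off the first-order optimality condition of the Sinkhorn dual, followed by a differentiation under the integral sign. Write $\mu \triangleq {\mM_\varepsilon^\star}_\sharp \alpha \in \cP(\R^{d_y})$ (recall $\mM_\varepsilon^\star \in \R^{d_y \times d_x}$, so $\mu$ lives on $\R^{d_y}$ and hence $f_{\varepsilon,\varepsilon^\prime}$ is a function on $\R^{d_y}$). By Definition~\ref{def:ent_map}, the pair $(f_{\varepsilon,\varepsilon^\prime}, g_{\varepsilon,\varepsilon^\prime})$ maximizes the entropic dual of $\cW^{\varepsilon^\prime}_{\ip{\cdot}{\cdot}}(\mu, \beta)$; since the cost is $-\ip{\cdot}{\cdot}$, its dual objective carries $+\ip{x}{y}$ inside the exponential, exactly as written there. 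First I would set the variation of this objective in $f$ to zero, which forces $\int \exp\br{(f(z)+g_{\varepsilon,\varepsilon^\prime}(y)+\ip{z}{y})/\varepsilon^\prime}\diff\beta(y) = 1$ for $\mu$-a.e.\ $z$, i.e.\ the soft $c$-transform relation
\[
    f_{\varepsilon,\varepsilon^\prime}(z) = -\varepsilon^\prime \log \int \exp\br{\frac{g_{\varepsilon,\varepsilon^\prime}(y) + \ip{z}{y}}{\varepsilon^\prime}} \diff \beta(y).
\]

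Next I would observe that, because $\beta$ has compact support and $g_{\varepsilon,\varepsilon^\prime}$ is continuous (hence bounded on $\mathrm{supp}(\beta)$), the right-hand side above is a smooth function of $z$ on all of $\R^{d_y}$, providing the canonical smooth extension of $f_{\varepsilon,\varepsilon^\prime}$ on which $\nabla f_{\varepsilon,\varepsilon^\prime}$ is evaluated. I would then differentiate in $z$, justifying the interchange of $\nabla_z$ and $\int \cdot\,\diff\beta$ by dominated convergence: for $z$ in a bounded neighbourhood and $y \in \mathrm{supp}(\beta)$, both the integrand and its gradient $\tfrac{y}{\varepsilon^\prime}\exp(\cdot)$ are uniformly bounded. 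A direct computation (the $\varepsilon^\prime$ in front cancels the $1/\varepsilon^\prime$ from the chain rule) then yields
\[
    \nabla f_{\varepsilon,\varepsilon^\prime}(z) = - \frac{\int y\, \exp\br{\br{g_{\varepsilon,\varepsilon^\prime}(y) + \ip{z}{y}}/\varepsilon^\prime} \diff \beta(y)}{\int \exp\br{\br{g_{\varepsilon,\varepsilon^\prime}(y) + \ip{z}{y}}/\varepsilon^\prime} \diff \beta(y)}.
\]

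Finally I would substitute $z = \mM_\varepsilon^\star x$ and compare with \eqref{eq:ent-map}: the numerator and denominator coincide termwise, so $T_{\varepsilon,\varepsilon^\prime}(x) = -\nabla f_{\varepsilon,\varepsilon^\prime}(\mM_\varepsilon^\star x)$, that is $T_{\varepsilon,\varepsilon^\prime} = -\nabla f_{\varepsilon,\varepsilon^\prime}\circ \mM_\varepsilon^\star$, which is the claim. The only points requiring genuine care are the sign bookkeeping (tracking that the cost is $-\ip{\cdot}{\cdot}$, which is precisely what turns the soft-$c$-transform into the $+\ip{z}{y}$ log-sum-exp and produces the overall minus sign) and the justification of differentiating under the integral. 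Compactness of $\mathrm{supp}(\beta)$ together with continuity of $g_{\varepsilon,\varepsilon^\prime}$ make the latter routine, so I expect the sign conventions to be the main thing to get exactly right.
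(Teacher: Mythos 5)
Your proof is correct and follows essentially the same route as the paper's: derive the soft $c$-transform relation $f_{\varepsilon,\varepsilon^\prime}(z) = -\varepsilon^\prime \log \int \exp\br{\br{g_{\varepsilon,\varepsilon^\prime}(y) + \ip{z}{y}}/\varepsilon^\prime}\diff\beta(y)$ from the dual optimality conditions, differentiate in $z$, and evaluate at $z = \mM_\varepsilon^\star x$ to match \eqref{eq:ent-map}. Your added justification of the differentiation under the integral (via compact support of $\beta$ and dominated convergence) and your consistent use of $\varepsilon^\prime$ in the exponents are refinements the paper glosses over, but the argument is identical in substance.
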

Using the previous lemma and results from \cite{pooladian2021entropic, zhang2022gromov}, we have the following convergence result.
\begin{proposition}\label{prop:full_rank_prop}
Let $\alpha, \beta \in \cP(\cX) \times \cP(\cY)$ be two measures with compact supports. Assume that they are regular enough (namely that they verify assumptions \textbf{(A1-3)} from \cite{pooladian2021entropic}). Let $\varepsilon, \varepsilon^\prime > 0$ and consider the map $T_{\varepsilon, \varepsilon^\prime}$ defined in \eqref{eq:ent-map}. Then, we have that $T_{\varepsilon, \varepsilon^\prime} \xrightarrow{L^2(\alpha)} T_{\varepsilon, 0}$ as $\varepsilon^\prime \to 0$, where $T_{\varepsilon, 0}$ is a Monge map for $\cW^0_{c_{\mM_\varepsilon^\star}}\br{\alpha, \beta}$, and $c_{\mM_\varepsilon^\star}(x, y) = - \ip{M_\varepsilon^\star x}{y}$. Moreover, we have along a subsequence that $T_{\varepsilon, 0} \xrightarrow{L^2(\alpha)} T_{0, 0}$ as $\varepsilon \to 0$, where $T_{0, 0}$ is a Monge map for \ref{eq:linear_rot}. Hence, along a subsequence,
\begin{align}
    \lim_{\varepsilon \to 0} \lim_{\varepsilon^\prime \to 0} T_{\varepsilon, \varepsilon^\prime} = T_{0, 0} \quad \text{in} \; L^2(\alpha).
\end{align}
\end{proposition}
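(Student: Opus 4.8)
The plan is to establish the two limits in turn: first the inner limit $\varepsilon^\prime \to 0$ at frozen $\varepsilon$, which reduces to the entropic-map convergence of \citet{pooladian2021entropic} after composing with the linear map; then the outer limit $\varepsilon \to 0$, taken along a subsequence, which combines compactness of the optimal transforms with stability of Brenier maps under perturbation of the source measure.

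For the inner limit I would fix $\varepsilon > 0$ so that $\mM_\varepsilon^\star$ is frozen, and set $\mu_\varepsilon \triangleq {\mM_\varepsilon^\star}_\sharp \alpha$. By Lemma \ref{prop:ent_map_sinkhorn_potentials} we may write $T_{\varepsilon, \varepsilon^\prime} = S_{\varepsilon^\prime} \circ \mM_\varepsilon^\star$, where $S_{\varepsilon^\prime} = -\nabla f_{\varepsilon, \varepsilon^\prime}$ is exactly the entropic map of \citet{pooladian2021entropic} for the inner-product cost between $\mu_\varepsilon$ and $\beta$ at strength $\varepsilon^\prime$. Since $-\ip{x}{y}$ differs from $\tfrac12 \sqn{x-y}$ only by single-variable terms that are constant on $\Pi(\mu_\varepsilon, \beta)$, the two problems share a Brenier map, and the convergence theory of \citet{pooladian2021entropic} gives $S_{\varepsilon^\prime} \to S_0$ in $L^2(\mu_\varepsilon)$ with $S_0 = -\nabla f_{\varepsilon, 0}$ the Brenier map from $\mu_\varepsilon$ to $\beta$. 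The crucial but routine observation is the change-of-variables identity $\norm{T_{\varepsilon, \varepsilon^\prime} - T_{\varepsilon, 0}}_{L^2(\alpha)} = \norm{S_{\varepsilon^\prime} - S_0}_{L^2(\mu_\varepsilon)}$, obtained by pushing $\alpha$ through $\mM_\varepsilon^\star$, which immediately yields $T_{\varepsilon, \varepsilon^\prime} \to T_{\varepsilon, 0}$ in $L^2(\alpha)$. That $T_{\varepsilon, 0} = S_0 \circ \mM_\varepsilon^\star$ is a Monge map for $\cW^0_{c_{\mM_\varepsilon^\star}}(\alpha, \beta)$ then follows from ${S_0}_\sharp \mu_\varepsilon = \beta$ (so $(T_{\varepsilon, 0})_\sharp \alpha = \beta$) together with the fact that optimality for $-\ip{z}{S_0(z)}$ against $\mu_\varepsilon$ transfers to optimality for $-\ip{\mM_\varepsilon^\star x}{y}$ against $\alpha$.

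The outer limit is where the real work lies. I would first argue compactness: the supports being compact, the matrices $\int y x^\top \diff \pi$ are uniformly bounded, and since $\mM_\varepsilon^\star = \prox_{\lambda g}(\int y x^\top \diff \pi_\varepsilon^\star)$ by Lemma \ref{lem:prox_M}, the family $\{\mM_\varepsilon^\star\}$ is bounded, while $\{\pi_\varepsilon^\star\}$ is tight by Prokhorov. I extract $\varepsilon_k \to 0$ along which $\mM_{\varepsilon_k}^\star \to \mM_0^\star$ and $\pi_{\varepsilon_k}^\star \to \pi_0^\star$ weakly, and verify by the $\Gamma$-convergence of entropic OT (the penalty $\varepsilon \KL$ vanishes while the bilinear term is jointly continuous and $\cR$ is lower semicontinuous) that $(\mM_0^\star, \pi_0^\star)$ is optimal for \eqref{eq:linear_rot} with $\varepsilon = 0$; this is precisely why the statement holds only along a subsequence, as optimizers need not be unique. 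Writing $\mu_0 \triangleq {\mM_0^\star}_\sharp \alpha$ and letting $R_k$ (resp.\ $R_0$) be the Brenier map from $\mu_{\varepsilon_k}$ (resp.\ $\mu_0$) to $\beta$, so that $T_{\varepsilon_k, 0} = R_k \circ \mM_{\varepsilon_k}^\star$ and $T_{0,0} = R_0 \circ \mM_0^\star$ (a Monge map for \eqref{eq:linear_rot} with $\varepsilon = 0$ by Proposition \ref{prop:monge_map_linear_costs}), I split via the triangle inequality
\begin{align}
\norm{T_{\varepsilon_k, 0} - T_{0,0}}_{L^2(\alpha)} \le \norm{R_k - R_0}_{L^2(\mu_{\varepsilon_k})} + \norm{R_0 \circ \mM_{\varepsilon_k}^\star - R_0 \circ \mM_0^\star}_{L^2(\alpha)}.
\end{align}
The first term is controlled by stability of Brenier maps under source perturbation $\mu_{\varepsilon_k} \to \mu_0$ at fixed regular target $\beta$, invoking \citet{zhang2022gromov}; the second vanishes because, by Caffarelli regularity, $R_0$ is continuous, $\mM_{\varepsilon_k}^\star \to \mM_0^\star$ uniformly on the compact support of $\alpha$, and dominated convergence applies.

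The chief obstacle is the outer limit, where two points deserve care. First, for both the \citet{pooladian2021entropic} convergence and the Brenier-map stability to be available, one must check that the pushforwards $\mu_\varepsilon = {\mM_\varepsilon^\star}_\sharp \alpha$ inherit assumptions \textbf{(A1-3)}; this is exactly where the full-rank hypothesis of this subsection enters, since a surjective linear map ($d_x \ge d_y$, $\rk \mM_\varepsilon^\star = d_y$) sends a regular $\alpha$ to a regular $\mu_\varepsilon$, but the regularity constants must be controlled uniformly in $k$ and survive in the limit $\mu_0$, which in turn forces $\mM_0^\star$ to remain full rank. Second, the estimate $\norm{R_k - R_0}_{L^2(\mu_{\varepsilon_k})} \to 0$ is genuinely delicate because the reference measure itself varies with $k$; one must either transport both maps to a common measure or use a stability bound phrased directly in terms of Wasserstein convergence of the sources, which is what makes leaning on \citet{zhang2022gromov} essential. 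Once these points are secured, chaining the inner and outer limits yields $\lim_{\varepsilon \to 0} \lim_{\varepsilon^\prime \to 0} T_{\varepsilon, \varepsilon^\prime} = T_{0,0}$ in $L^2(\alpha)$ along the subsequence.
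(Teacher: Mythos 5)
Your proof takes essentially the same route as the paper's: the inner limit is handled identically (Lemma \ref{prop:ent_map_sinkhorn_potentials}, the inner-product/squared-Euclidean equivalence, the convergence theory of \cite{pooladian2021entropic} for the pair $({\mM_\varepsilon^\star}_\sharp \alpha, \beta)$, then the change of variables through $\mM_\varepsilon^\star$), and the outer limit has the same structure (subsequential convergence of the optimizers, stability of Brenier maps under weak convergence of the source, and a continuity argument for the composition, which is exactly the paper's two-bracket splitting written as your triangle inequality). The one discrepancy is bibliographic rather than mathematical: the Brenier-map stability step $\norm{R_k - R_0}_{L^2(\mu_{\varepsilon_k})} \to 0$ is not in \cite{zhang2022gromov} — the paper takes it from the proof of Theorem 4.2 in \cite{philippis2013regularity} (local uniform convergence of the gradients of the potentials, upgraded to uniform convergence by compactness) — whereas \cite{zhang2022gromov} is used in the paper only for the subsequential weak convergence $\pi_\varepsilon^\star \rightharpoonup \pi_0^\star$, the very fact you chose to re-derive by compactness and $\Gamma$-convergence; your added care about $\mu_\varepsilon$ inheriting assumptions \textbf{(A1-3)} via full-rankness is a point the paper's proof leaves implicit.
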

\textbf{The case where $\mM^\star$ is not full rank.} The case where $\mM^\star$ is not full rank is more involved and with limited use in practice, as the theory in \cite{dumont2022existence} requires solving an OT problem between conditional probabilities, which are not accessible in practice. In the appendix, we simulate problems where we explicitly constrain $\mM^\star$ to being rank-deficient. We test on Gaussians (for which ground-truth OT maps are known \citep{salmona2022gromov}) and show that rank-defficiency has no effect on the convergence of the entropic map.

\textbf{Sparsifying transforms and entropic Monge maps.} As we saw in the previous section, we can show the convergence of the entropic map for \ref{eq:linear_rot} when $\mM^\star$ is full rank. With sparsifying and low-rank norms, it is unlikely that such a matrix is full rank. However, when using $\ell_{1, 2}$ regularization, our goal is to operate feature selection on $\cX$, so that we don't actually care about the dimensions on which the matrix $\mM_\varepsilon^\star$ is $0$ (which correspond to features we want to discard). Thus we can restrict our study to the existence of Monge maps between ${P_{I}}_\sharp \alpha$ and $\beta$ for the cost $c_{\mM^\star_I}(x, y) = - \ip{\mM_I^\star x}{y}$, where $I = \left\{i \in [d_x]: \norm{\mM_{;i}} \neq 0 \right\}$, $P_I$ is the projection operator on the dimensions $I$, and $\mM_I$ the restriction of $\mM$ to the columns indexed by $I$. Note, though, that this matrix could still be rank-deficient.

Similarly, nuclear norm regularization inherently makes $\mM^\star$ low-rank. As we saw earlier it results in a linear OT problem with an inner product cost between $\tilde{\mU}_\sharp \alpha$ and $\tilde{\mV}_\sharp \beta$ (i.e. $\cW_{\ip{\cdot}{\cdot}}(\tilde{U}_\sharp \alpha, \tilde{V}_\sharp \beta)$). If we are interested in low-dimensional representations of $\alpha$ and $\beta$, there always exists a Monge map between $\tilde{\mU}_\sharp \alpha$ and $\tilde{\mV}_\sharp \beta$, and the entropic map converges to the Monge map since it is associated with the inner-product cost on $\R^r$, where $r = \rk(\mM^\star)$ \citep{pooladian2021entropic}.
\section{Applications} \label{sec:applications}
In all of the applications below, we use our proximal approach to compute transforms, jointly with the entropic map \eqref{eq:ent_map_practical} to displace points from one space to the other in and/or out of sample. That map's $\varepsilon$ regularization is selected using cross-validation on the training set using a grid in [5e-3, 1e-3, 5e-4, 1e-4, 5e-5, 1e-5].
\subsection{Multi-omics data Integration: Sparse and Low-Rank Transforms} \label{sec:sparse_lr_appli}

\textbf{Sparse Transforms.} We consider the scGM dataset \citep{cheow2016single} containing the gene expression and DNA methylation modalities for human somatic cells \citep{welch2017matcher}. The goal is to match the $177$ samples of the dataset across modalities using an entropic map from the gene expression ($d_x = 34$) to the DNA methylation domain ($d_y = 27$). Performance is measured in terms of Label Transfer Accuracy, a common way to evaluate single-cell data integration tasks \citep{demetci2022scot} (see Appendix).
\begin{figure}[t]
\captionsetup{font=footnotesize}
\centering
\begin{subfigure}{.265\textwidth}
 \centering
 \includegraphics[width=.97\linewidth, keepaspectratio]{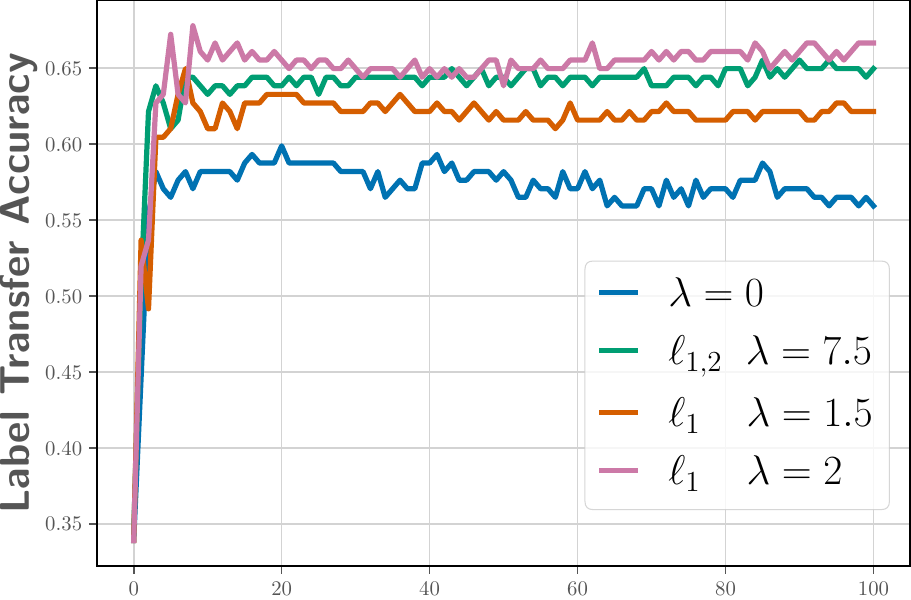}
 \label{fig:sub1}
\end{subfigure}%
\begin{subfigure}{.245\textwidth}
 \centering
 \includegraphics[width=.95\linewidth, keepaspectratio]{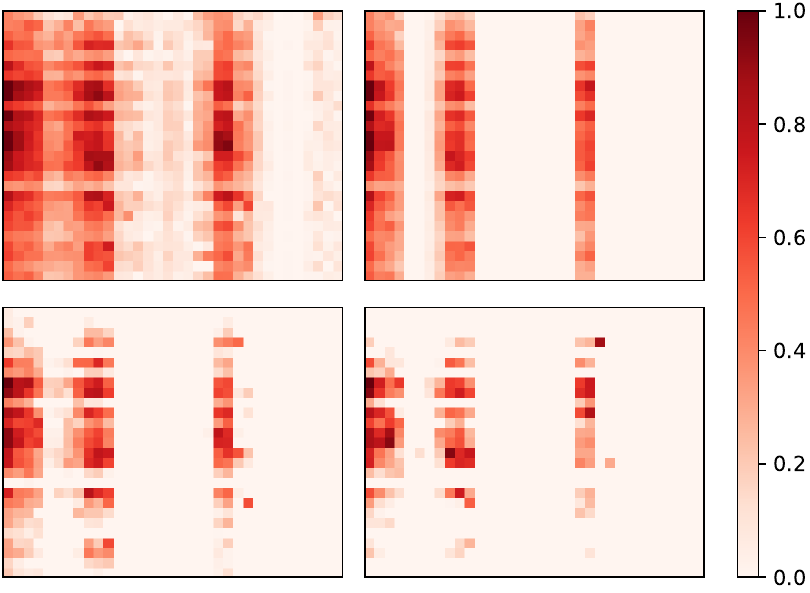}
 \label{fig:sub2}
\end{subfigure}
\vspace*{.1mm}
\caption{\textbf{Left:} Convergence of \ref{alg:Prox-ROT} with different sparsifying regularizations $\cR$ on the scGM dataset. X-axis: \# updates of the OT plan. Y-axis: LTA (higher is better). $\lambda = 0$ corresponds to standard \ref{eq:GW-IP}. $\varepsilon=0.05$. \textbf{Right:} $\mM_\varepsilon^\star$. \% zeros: \textit{Top} left $\lambda=0: 6\%$, right $\ell_{1,2}(\lambda=7.5): 67\%$; \textit{Bottom} left $\ell_{1}(\lambda=1.5): 71\%$, right $\ell_{1}(\lambda=2): 84\%$.}
\label{fig:sparse}
\end{figure}
We compare solving \ref{eq:GW-IP} using nested Sinkhorn iterations \citep{peyre2016gromov} to \ref{alg:Prox-ROT} with the $\ell_1$ and $\ell_{1,2}$ regularizations in \eqref{eq:reg-OT_g}. Adding a sparsity prior to the map $\mM$ results in a better performance (Figure \ref{fig:sparse}). Using an $\ell_1$ or $\ell_{1,2}$ regularization also outperforms all the methods in \cite[Figure 2]{demetci2022scot}.

\textbf{Low-Rank Transforms.} We consider the Neurips 2021 multimodal single-cell integration dataset~\citep{lance2022multimodal}. We use the {Site 1}/{Donor 1} stored in the MOSCOT package \citep{klein2023mapping}. Like scGM, the task consists in mapping $6,224$ cells from the chromatin accessibility $(d_y=8,000)$ to the gene expression domain $(d_x=2,000)$. 

We consider random subsets of the data of sizes $25, 50, 100, 250, 500, 1000, 6224$. On each subset, we run \ref{alg:Prox-ROT} with rank constraint between 5 and 12. We display for each subset the best-performing rank on average in Figure \ref{fig:lr_vs_sink} (that rank is always 11 or less). Performance is measured in terms of FOSCTTM (fraction of samples closer than the true match) depending on the subset size. Compared to GW, low-rank \ref{alg:Prox-ROT} handles settings where $n \ll d_y$ much better than GW. Note that \ref{alg:Prox-ROT}-$\rk$ still has benefits in the high $n$ regime since the dimension of the OT problem solved at each iteration is at most $12$ vs. $2,000$ for GW.

\vspace*{-1.5mm}
\subsection{Spatial Transcriptomics: Stochastic Fused GW-IP}
An important benefit of the cost-regularized formulation of GW-IP is that it lends itself ideally to stochastic optimization. Indeed, let $\alpha = \sum_{i=1}^m \alpha_i \delta_{x_i}$ and $\beta = \sum_{j=1}^n \beta_j \delta_{y_j}$ be two discrtete distributions. Then, using entropic regularization, we can rewrite \ref{eq:GW-IP-ROT} using the dual formulation of linear OT as
\begin{align}
    &\min_{\substack{\mM \in \R^{d_y \times d_x} \\ (f,g) \in \R^m \times \R^n}}  \sum_{i=1}^m f_i \alpha_i + \sum_{j=1}^n g_j \beta_j  + \frac{1}{8}\norm{\mM}_F^2 \label{eq:dual_fused} \\
    & - \varepsilon \sum_{i,j}\exp\br{\frac{f_i + g_j + \ip{\mM x_i}{y_j} + \eta\tilde{c}\br{\tilde{x}_i, \tilde{y}_j}}{\varepsilon}}\alpha_i\beta_j. 
\end{align}
where $\eta > 0$, and as in \eqref{eq:ROT_ent_fused}, we can use an additional known inter-space cost $\tilde{c}$, which corresponds to using Fused \ref{eq:GW-IP}. We can solve this problem using (epoch) stochastic gradient descent ascent. 
\begin{figure}[t]
\captionsetup{font=footnotesize}
\centering
\includegraphics[width=\linewidth, keepaspectratio]{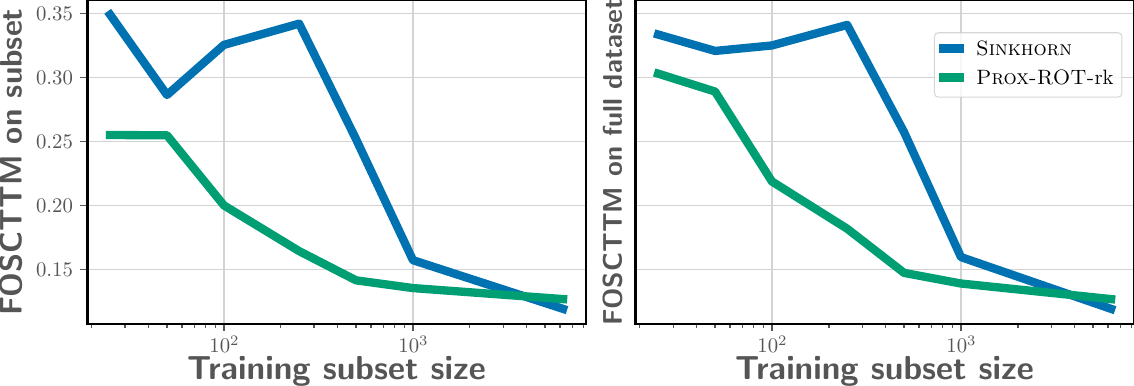}
\vspace*{.01mm}
\caption{Average performance (FOSCTTM - lower is better) across sampled subsets of GW vs. \ref{alg:Prox-ROT} depending on the size of training data. \textbf{Left:} Training performance on each subset size. \textbf{Right:} Evaluation on the full dataset of the map computed using a subset.}
\label{fig:lr_vs_sink}
\end{figure}

\begin{center}
\scalebox{0.76}{
\begin{tabular}{@{}lccccc@{}} \toprule
\textbf{Alg.} & \textbf{val} $\rho$ & \textbf{test} $\rho$ & \textbf{F1 macro} & \textbf{F1 micro}  & \textbf{F1 weighted} \\ \midrule
LFGW  & 0.365 & 0.443 & 0.576 & 0.720 & 0.714   \\
ULFGW  & \textbf{0.379} & \textbf{0.463} & 0.582 & 0.733 & 0.724   \\ \midrule
SFGW-IP  & 0.335 & 0.436 & \textbf{0.609} & \textbf{0.768} & \textbf{0.765}   \\ \bottomrule
\end{tabular}
}
\vspace*{.5mm}
\captionof{table}{Performance of different GW variants on a spatial transciptomics task. Details for LFGW and ULFGW can be found in \cite{scetbon2023unbalanced}. SFGW-IP refers to solving \eqref{eq:dual_fused} via epoch SGDA.}
\label{tab:spatial_tr}
\end{center}

We reproduce the experimental setting of \cite{scetbon2023unbalanced}. The goal is to align cells from two coronal sections of a mouse brain \citep{shi2023spatial}. The linear OT term in the fused formulation comes from a 30d PCA computed in gene expression space. We use validation Pearson correlation for hyperparameter selection. We show in Table \ref{tab:spatial_tr} that the stochastic approach, while having a lower per-iteration cost, is competitive with all GW variants, and significantly exceeds them in F1 scores (see appendix for more details).

\textbf{Conclusion.} We have leveraged a cost-regularized OT perspective on GW problems to propose new methods for OT across spaces that can induce structure (e.g. sparsity) on the transformation from one space to another. Our injection of sparsity into the transform $\mathbf{M}$ is unrelated to the sparsity observed naturally for couplings when solving the original OT problem, from LP duality or from other regularizations, as in e.g.\citep{pmlr-v84-blondel18a,liu2022sparsity}, nor is it related to that obtained for same-space Monge displacements by the MBO estimator~\citep{cuturi2023monge}. Our use of low-rank constraints for transform $\mathbf{M}$ is also unrelated to the low rank constraint on couplings introduced by~\citep{scetbon2021low,pmlr-v162-scetbon22b}.
The ability to add structural assumptions on across-space transforms opens up new perspectives to use transport \textit{across} high-dimensional modalities. We also showed the existence of Monge maps for our formulation, and demonstrated their applicability with entropic maps. We used a proximal alternated minimization algorithm together with structure-inducing regularizations, demonstrating applicability to single-cell multiomics data matching and spatial transcriptomics tasks.
\newpage

\bibliographystyle{apalike}
\bibliography{biblio}

\medskip

\clearpage

\appendix

\onecolumn 

\newpage

\aistatstitle{SUPPLEMENTARY MATERIAL \\ Structured Transforms Across Spaces \\ with Cost-Regularized Optimal Transport}

\section{Appendix for Section \ref{sec:background}}
\subsection{Proof of Proposition \ref{prop:Q_wasserstein_procrustes}}
Recall that the Wasserstein Procrustes problem \eqref{eq:wasserstein_procrustes} is
\begin{align}\label{eq:wasserstein_procrustes_app}
    \min_{\pi \in \Pi(\alpha, \beta)}\min_{\m{C} \in \cO_{d}} \int_{\cX \times \cY} \norm{\m{C}x - y}^2\diff \pi(x,y).
\end{align}
As noticed in e.g. Eq. 11 in \cite{zhang2017earth}, this formulation is an extension of the orthogonal Procrustes problem. Given two matrices $\mA, \mB \in \R^{d \times d}$ The orthogonal Procrustes problem aims to solve
\begin{align}
    \min_{\mC \in \cO_d} \norm{\mC A - B}_F.
\end{align}
The solution of this problem is $\mC^\star = \mU \mV^\top$, where $\mB \mA^\top = \mU \mSigma \mV^\top$. One can follow a similar procedure to show that a solution to the inner problem in \eqref{eq:wasserstein_procrustes_app} is given by $\mC^\top = \mU(\pi) \mV(\pi)^\top$, where $\int yx^\top \diff \pi(x,y) = \mU(\pi) \mSigma(\pi) \mV(\pi)^\top$ is an SVD. Replacing with this solution in the inner problem gives the desired \ref{eq:QOT} formulation:
\begin{align}
    \min_{\pi \in \Pi(\alpha, \beta)} \int_{\cX \times \cY} \norm{\mU(\pi) \mV(\pi)^\top x - y}^2\diff \pi(x,y).
\end{align}

\section{Appendix for Section \ref{sec:struct_ot}}
\subsection{Proof of Proposition \ref{prop:Q_linear_ROT}}
Starting from \ref{eq:linear_rot}, we have
\begin{gather}\label{eq:linear_rot_app}
    \min_{\substack{\pi \in \Pi(\alpha, \beta) \\ \mM \in \R^{d_y \times d_x}}} \int - \ip{\mM x}{y} d\pi + \cR(\mM) + \varepsilon \textsc{KL}(\pi || \alpha \otimes \beta), \\
    = \min_{\pi \in \Pi(\alpha, \beta)} \varepsilon \textsc{KL}(\pi || \alpha \otimes \beta) + \min_{\mM \in \R^{d_y \times d_x}}  \int - \ip{\mM x}{y} d\pi + \cR(\mM) \\
    \min_{\pi \in \Pi(\alpha, \beta)} \varepsilon \textsc{KL}(\pi || \alpha \otimes \beta) + \min_{\mM \in \R^{d_y \times d_x}}   - \ip{\mM }{\int yx^\top \diff \pi}_F  + \cR(\mM)\\
    = \min_{\pi \in \Pi(\alpha, \beta)} \varepsilon \textsc{KL}(\pi || \alpha \otimes \beta) - \max_{\mM \in \R^{d_y \times d_x}}   - \ip{\mM }{\int yx^\top \diff \pi}_F  + \cR(\mM) \\
    = \min_{\pi \in \Pi(\alpha, \beta)} \varepsilon \textsc{KL}(\pi || \alpha \otimes \beta) - \cR^*\br{\int_{\cX \times \cY} yx^\top\diff \pi(x,y)},
\end{gather}
which is the desired result.

\subsection{Proof of Proposition \ref{prop:convergence_alt_min}}
Consider \ref{eq:linear_rot}:
\begin{gather}
    \min_{\substack{\pi \in \Pi(\alpha, \beta) \\ \mM \in \R^{d_y \times d_x}}} \int - \ip{\mM x}{y} \diff \pi + \cR(\mM) + \varepsilon \textsc{KL}(\pi || \alpha \otimes \beta).
\end{gather}
Under the assumption of Proposition \ref{prop:convergence_alt_min}, the minimizer over $\pi$ is unique for any fixed $\mM$, and the minimizer over $\mM$ is unique for any fixed $\pi$. Thus, we can directly apply \cite[Prop. 2.7.1 and Ex. 2.7.1]{bertsekas1997nonlinear} to conclude.

\subsection{Proof of Lemma \ref{lem:prox_M}}
For any $\pi \in \Pi(\alpha, \beta)$,
\begin{gather}
        \mM(\pi) \triangleq \argmin_{\mM} - \int_{\cX \times \cY}\ip{\mM x}{y}d\pi + \frac{\norm{\mM}_F^2}{2} + \lambda g(\mM) \\
        = \argmin_{\mM} - \ip{\mM}{\int_{\cX \times \cY} yx^\top \diff \pi}_F  + \frac{\norm{\mM}_F^2}{2} + \lambda g(\mM) \\
        = \argmin_{\mM} \frac{1}{2} \norm{\mM -\int_{\cX \times \cY} yx^\top \diff \pi}^2_F + \lambda g(\mM) \\
        = \prox_{\lambda g}\br{\int_{\cX \times \cY} yx^\top d\pi},
\end{gather}
where in the third line we introduced an additional term $\frac{1}{2}\norm{\int_{\cX \times \cY}yx^\top d\pi}_F^2$ which doesn't depend on $M$.

\subsection{Proof of Proposition \ref{prop:l1_l12}}
Recall the cost-regularized OT problem with $\varepsilon=0$
\begin{align}
    \argmin_{\substack{\pi \in \Pi(\alpha, \beta) \\ \mM \in \R^{d_y \times d_x}}} - & \int \ip{\mM x}{y}d\pi + \frac{\norm{\mM}_F^2}{2} + \lambda g(\mM) \label{eq:reg-OT_g_app}.
\end{align}

Denote by $h : \Pi(\alpha, \beta) \times \R^{d_y \times d_x} \to \R$ and $H : \Pi(\alpha, \beta) \to \R$:
\begin{gather}
    h(\pi, \mM) = - \int_{\cX \times \cY}\ip{\mM x}{y}d\pi + \frac{\norm{\mM}_F^2}{2} + \lambda g(\mM) \quad \text{and} \quad H(\pi) = \min_{\mM \in  \R^{d_y \times d_x}} h(\pi, \mM).
\end{gather}

As shown in Lemma \ref{lem:prox_M}, given $\pi$, the solution of \ref{eq:reg-OT_g_app} is
\begin{align}
        & \mM(\pi) \triangleq \prox_{\lambda g}\br{\int_{\cX \times \cY} yx^\top d\pi},
\end{align}
Proposition \ref{prop:l1_l12} follows by evaluating the objective \eqref{eq:reg-OT_g_app} in the solution $\mM(\pi)$ using the corresponding proximal operator for each regularization.

\textbf{$\ell_1$-regularization.} Consider the case where $g(\mM) = \norm{\mM}_1$. Recall that for $\mM \in \R^{d_y \times d_x}$ and all $i,j \in [d_x] \times [d_y]$,
\begin{align}
    \prox_{\lambda \norm{\cdot}_1}\br{\mM}_{ij} & = \sgn(\mM_{ij})\br{\abs{\mM_{ij}} - \lambda}_+.
\end{align}
Thus, for $\mM(\pi)$,
\begin{align}\label{eq:prox_m(pi)_l1_app}
    \prox_{\lambda \norm{\cdot}_1}\br{\mM(\pi)}_{ij} & = \sgn\br{\int_{\cX \times \cY} x_i y_j d\pi}\br{\int_{\cX \times \cY} x_i y_j d\pi - \lambda}_+.
\end{align}
Define $I_{\pi} \triangleq \left\{(i, j) \in [d_x] \times [d_y] \,: \; \int_{\cX \times \cY} x_iy_j d\pi(x, y) > \lambda \right\}$. Plugging \eqref{eq:prox_m(pi)_l1_app} in \eqref{eq:reg-OT_g_app} gives, after reducing,
\begin{gather}
    \argmin_{\substack{\pi \in \Pi(\alpha, \beta)}} \sum_{(i,j) \in I_\pi}  \int_{\cX \times \cY} \int_{\cX \times \cY} - x_iy_jx_i^\prime y_j^\prime \diff \pi(x,y) \diff \pi(x^\prime, y^\prime) + \lambda \sum_{(i,j) \in I_\pi} \abs{\int_{\cX \times \cY} x_i y_j \diff \pi} - \frac{\lambda^2}{2}|I_{\pi}| \label{eq:inter_l1_math} \\
    = \int_{\br{\cX \times \cY}^2} \sum_{(i, j) \in I_{\pi}} \br{x_i x_i^\prime - y_j y_j^\prime}^2 \diff \pi \diff \pi - \sum_{(i,j)\in I_\pi}\br{\int_\cX x_i^2 \diff \alpha + \int_\cY y_j^2 \diff \beta  - \lambda \abs{\int_{\cX \times \cY} y_j x_i d\pi(x, y)}} - \frac{\lambda^2}{2}|I_{\pi}|.
\end{gather}
Let $\br{\pi^\star, \mM(\pi^\star)}$ be a solution to \eqref{eq:reg-OT_g}. Then
\begin{align}
    \pi \in \argmin_{\pi \in \Pi(\alpha, \beta)} H(\pi) \iff \pi \in \argmin_{\pi \in \Pi(\alpha, \beta)} h(\pi, \mM(\pi^\star)).
\end{align}
Hence, using the same calculations as in \eqref{eq:inter_l1_math}, \eqref{eq:reg-OT_g_app} is equivalent to 
\begin{align}
    \min_{\pi \in \Pi(\alpha, \beta)} \int_{\br{\cX \times \cY}^2} \sum_{(i, j) \in I_{\pi^\star}} \br{x_i x_i^\prime - y_j y_j^\prime}^2 \diff \pi^\star \diff \pi - \sum_{(i,j)\in I_\pi^\star}\br{\int_\cX x_i^2 \diff \alpha + \int_\cY y_j^2 \diff \beta  - \lambda \abs{\int_{\cX \times \cY} y_j x_i d\pi^\star(x, y)}}.
\end{align}
Thus, ignoring the terms that don't depend on $\pi$, \eqref{eq:reg-OT_g_app} is equivalent to 
\begin{align}
    \min_{\pi \in \Pi(\alpha, \beta)} \int_{\br{\cX \times \cY}^2} \sum_{(i, j) \in I_{\pi^\star}} \br{x_i x_i^\prime - y_j y_j^\prime}^2 \diff \pi^\star \diff \pi,
\end{align}
which is the desired result.

\textbf{$\ell_{1, 2}$-regularization.} We proceed similarly. Recall that for $\mM \in \R^{d_y \times d_x}$ and all $i \in [d_x]$,
\begin{align}\label{eq:prox_m(pi)_l1_app}
    \prox_{\lambda \norm{\cdot}_{1,2}}\br{\mM}_{:i} & = \br{1 - \frac{\lambda}{\max\left\{\norm{\mM_{:i}}\, , \lambda \right\}}}\mM_{:i},
\end{align}
i.e.
\begin{align}
    \mM_{:i} = \left\{
    \begin{array}{ll}
        \br{1 - \frac{\lambda}{\mM_{:i}}}\mM_{:i} & \mbox{if } \norm{M_{:i}} > \lambda \\
        0 & \mbox{otherwise.}
    \end{array}
\right.
\end{align}
Let $\mA \in \R^{d_y \times d_x}$ and $\tilde{\mA} = \prox_{\lambda \norm{\cdot}_{1,2}}(\mA)$. Let $I \triangleq \left\{\norm{\mA_{:i}} > \lambda \right\}$. Then,
\begin{gather}
    - \ip{\mA}{\tilde{\mA}}_F + \frac{\norm{\tilde{\mA}}}{2} + \lambda \norm{\tilde{\mA}}_{1,2} \\
    = \lambda \sum_{i \in I} \norm{\mA_{:i}} - \lambda^2 + \sum_{i \in I} \sum_j \br{- \mA_{ij}^2 \br{1 - \frac{\lambda}{\norm{\mA_{:i}}}} + \frac{1}{2}\br{1 - \frac{\lambda}{\norm{\mA_{:i}}}}^2 \mA_{ij}^2}\\
    = \lambda \sum_{i \in I} \norm{\mA_{:i}} - \lambda^2 |I| - \frac{1}{2} \sum_{i \in I} \sum_j \br{\norm{\mA_{:i}} - \lambda^2} \frac{\mA_{ij}^2}{\norm{\mA_{:i}}} \\
    = \lambda \sum_{i \in I} \norm{\mA_{:i}} - \lambda^2 |I| - \frac{1}{2} \sum_{i \in I} \br{\norm{\mA_{:i}}^2 - \lambda^2} \\
    = - \frac{1}{2} \sum_{i \in I} \norm{\mA_{:i}}^2 + \lambda \sum_{i \in I} \norm{\mA_{:i}} - \frac{\lambda^2}{2} |I|
\end{gather}
Thus, using $\mA = \mM(\pi)$ and $I_\pi \triangleq \left\{i \in [d_x] \,: \; \norm{\int_{\cY \times \cY}x_i y \diff \pi} > \lambda \right\}$, \eqref{eq:reg-OT_g_app} is equivalent to
\begin{align}
    \min_{\pi \in \Pi(\alpha, \beta)} \int_{\br{\cX \times \cY}^2}  \br{\ip{x_{I_{\pi}}}{x_{I_\pi}^\prime} -  \ip{y}{y^\prime}}^2 \diff \pi \diff \pi - \int_\cX \ip{x_{I_\pi}}{x_{I_\pi}^\prime}^2 \diff \alpha + \sum_{i \in I_\pi} \norm{\int_{\cX \times \cY} x_i y d\pi(x, y)} - \frac{\lambda^2}{2} |I_\pi|.
\end{align}

Proceding as we did for $\ell_1$ regularization, let $\br{\pi^\star, \mM(\pi^\star)}$ be a solution to \eqref{eq:reg-OT_g}. Then \eqref{eq:reg-OT_g_app} is equivalent to 
\begin{align}
    \min_{\pi \in \Pi(\alpha, \beta)} \int_{\br{\cX \times \cY}^2}  \int_{\cX \times \cY} \br{ \ip{x_{I_{\pi^\star}}}{x_{I_{\pi^\star}}^\prime} - \ip{y}{y^\prime}}^2 \diff \pi^\star \diff \pi - \int_\cX \ip{x_{I_{\pi^\star}}}{x_{I_{\pi^\star}}^\prime}^2 \diff \alpha + \lambda \sum_{i \in I_{\pi^\star}} \norm{\int_{\cX \times \cY} x_i y d\pi^\star(x, y)} - \frac{\lambda^2}{2} |I_{\pi^\star}|
\end{align}
Thus, ignoring the terms that don't depend on $\pi$, \eqref{eq:reg-OT_g_app} is equivalent to 
\begin{align}
        \min_{\pi \in \Pi(\alpha, \beta)} \int_{\cX \times \cY} \br{ \ip{x_{I_{\pi^\star}}}{x_{I_{\pi^\star}}^\prime} - \ip{y}{y^\prime}}^2 \diff \pi^\star \diff \pi,
\end{align}
which is the desired result.

\section{Appendix for Section \ref{sec:maps}}
\subsection{Proof for \ref{prop:monge_map_linear_costs}}
Before moving to the proof of Proposition \ref{prop:monge_map_linear_costs}, we first clarify a point concerning the existence and convergence of entropic maps for the inner-product linear OT. 

The results of \cite{pooladian2021entropic}, which define an entropic map and prove its convergence to the Monge map, are dedicated to the linear OT problem with the squared euclidean cost with $\cX = \cY \subset \R^d$:
\begin{align}
    \argmin_{\pi \in \Pi(\alpha, \beta)} \int_{\cX \times \cY} \frac{1}{2}\sqn{x - y} d\pi(x, y).
\end{align}
In our case, an intermediary result that we will need is the convergence of the entropic map for the inner-product cost $c(x, y) = - \ip{x}{y}$. However, by developing the square, one can see that the two problems are equivalent:
\begin{gather}
    \argmin_{\pi \in \Pi(\alpha, \beta)} \int_{\cX \times \cY} \frac{1}{2}\sqn{x - y} d\pi(x, y) \\
    = \argmin_{\pi \in \Pi(\alpha, \beta)} \int_{\cX \times \cY} \frac{1}{2}\sqn{x} \diff \pi(x,y) - \int_{\cX \times \cY} \ip{x}{y} \diff \pi(x,y)  + \frac{1}{2}\sqn{y} \diff \pi(x,y) \\
    = \argmin_{\pi \in \Pi(\alpha, \beta)} \int_{\cX} \frac{1}{2}\sqn{x} \diff \alpha(x) - \int_{\cX \times \cY} \ip{x}{y} \diff \pi(x,y)  + \frac{1}{2}\int_\cY \sqn{y} \diff \beta(y) \\
    = \argmin_{\pi \in \Pi(\alpha, \beta)} - \int_{\cX \times \cY} \ip{x}{y} \diff \pi(x,y).
\end{gather}
Thus, squared euclidean linear OT and inner-product OT have the same OT plan, i.e. (since the OT plans for the squared euclidean cost are induced by Monge maps) the same Monge maps. Hence, the results of \cite{pooladian2021entropic} directly extend to the inner-product cost. An additional, easy to verify fact, is the form of the Monge maps for the inner-product cost.

\begin{lemma}\label{lem:monge_map_ip}
    Let $f$ be a Kantorovitch potential for the inner product cost between two measures $\mu$ and $\nu$ with compact supports in $\R^d$. Then the Monge map for the inner-product cost (or equivalently for the squared euclidean cost) can be written as
    \begin{align}
    T(x) = - \nabla f(x).
\end{align}
\end{lemma}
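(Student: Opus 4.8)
The plan is to prove Lemma \ref{lem:monge_map_ip}, which characterizes the Monge map for the inner-product cost $c(x,y) = -\ip{x}{y}$ as $T(x) = -\nabla f(x)$, where $f$ is a Kantorovich potential. My approach rests on the standard duality theory for optimal transport, combined with the fact—already established in the excerpt via the ``develop the square'' argument—that inner-product OT and squared-Euclidean OT share the same optimal plan and hence the same Monge map.

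First I would recall the Kantorovich duality and the notion of $c$-conjugate potentials for the cost $c(x,y) = -\ip{x}{y}$. A Kantorovich potential $f$ satisfies, for $\mu$-a.e. $x$, the relation $f(x) + f^c(y) = -\ip{x}{y}$ on the support of the optimal plan, where $f^c$ is the $c$-transform. The key structural fact is that for a linear cost the $c$-conjugacy relation becomes a Legendre–Fenchel-type duality. The plan is then to invoke the first-order optimality condition: at a point $(x,y)$ in the support of the optimal coupling, the map $x' \mapsto f(x') + \ip{x'}{y}$ attains a minimum (or the appropriate extremum consistent with the sign convention) at $x' = x$. Differentiating in $x$ gives $\nabla f(x) + y = 0$, i.e. $y = -\nabla f(x)$.

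The cleanest route, however, is to leverage the equivalence already proven in the excerpt. Since the squared-Euclidean problem and the inner-product problem have the same optimal plan, and Brenier's theorem (via \cite{brenier1991polar}) gives that the Monge map for $\tfrac12\sqn{x-y}$ is $T(x) = x - \nabla u(x) = \nabla\br{\tfrac12\sqn{x} - u(x)}$ for the relevant Brenier potential $u$, I would match potentials: the Kantorovich potential $f$ for the inner-product cost relates to the squared-Euclidean potential by an additive quadratic term $\tfrac12\sqn{x}$. Carrying this bookkeeping through, the gradient of the quadratic piece cancels the identity part of the Brenier map, leaving precisely $T(x) = -\nabla f(x)$. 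This reduces the lemma to a short computation verifying the change of potentials under the $-\tfrac12\sqn{\cdot}$ shift.

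The main obstacle I anticipate is purely one of \emph{regularity and differentiability}: the Kantorovich potential $f$ need only be $c$-concave (equivalently, $-f$ minus a quadratic is convex), so a priori $\nabla f$ exists only $\mu$-almost everywhere rather than everywhere. I would address this by noting that $c$-concavity for the linear cost makes $x \mapsto \tfrac12\sqn{x} - f(x)$ convex, so $f$ is differentiable Lebesgue-almost everywhere by Rademacher's theorem; combined with the assumption (inherited from the setting of \cite{pooladian2021entropic}) that $\alpha \ll \cL^{d}$, this guarantees $T(x) = -\nabla f(x)$ is well-defined $\alpha$-a.e., which is all that is needed for the map to push $\alpha$ forward to $\beta$. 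The verification that this $T$ indeed induces the optimal plan then follows from the support condition, closing the argument.
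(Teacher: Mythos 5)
Your proposal is correct and, in the route you actually adopt, identical to the paper's proof: the paper likewise shows that $\tilde f(x) = f(x) + \tfrac{1}{2}\sqn{x}$ is a Kantorovich potential for the squared Euclidean cost and then applies Brenier's formula $T(x) = x - \nabla \tilde f(x) = -\nabla f(x)$, using the already-established fact that the inner-product and squared-Euclidean costs share the same optimal plan and hence the same Monge map. Your extra remarks (the first-order-condition alternative on the support of the optimal plan, and the a.e.-differentiability of the potential under $\alpha \ll \cL^{d}$) are sound refinements that the paper's proof leaves implicit.
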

\begin{proof}
    One can show that if $f$ is a Kantorovitch potential for the inner product cost, then $\tilde{f}: x \mapsto f(x) 
+ \frac{\sqn{x}}{2}$ is a Kantorovitch potential for the squared Euclidean cost. And we know from \cite{brenier1991polar} that given any Kantorovitch potential $\bar{f}$ for the squared euclidean cost, the unique (since Kantorovitch potentials are equal up to a constant) Monge map can be written as $T(x) = x - \nabla \bar{f}(x)$. Thus, given a Kantorovitch potential $f$ for the IP cost, we can write the Monge map for the inner product cost (or for squared euclidean costs since they have the same Monge map) as $T(x) = x - \nabla (f(x) + \tfrac{\sqn{x}}{2}) = - \nabla f(x)$.
\end{proof}

For completeness, we rewrite the entropic map of Definition \ref{def:ent_map}.
\begin{align}\label{eq:ent-map-app}
    T_{\varepsilon, \varepsilon^\prime}(x) = \frac{\int y \exp{\br{\br{g_{\varepsilon, \varepsilon^\prime}(y) + \ip{\mM_\varepsilon^\star x}{y}}/\varepsilon}}d\beta(y)}{\int \exp{\br{\br{g_{\varepsilon, \varepsilon^\star}(y) + \ip{\mM_\varepsilon^\star x}{y}}/\varepsilon}}d\beta(y)}.
\end{align}

\begin{proposition}\label{prop:ent_map_sinkhorn_potentials_app}
    Let $(f_{\varepsilon, \varepsilon^\prime}, g_{\varepsilon, \varepsilon^\prime})$ be Sinkhorn potentials for the inner product cost between ${\mM_\varepsilon^\star}_\sharp \alpha$ and $\beta$ with an $\varepsilon^\prime$ entropic regularization. Then,
    \begin{align}
        T_{\varepsilon, \varepsilon^\prime} = - \nabla f_{\varepsilon, \varepsilon^\prime} \circ \mM_\varepsilon^\star
    \end{align}
\end{proposition}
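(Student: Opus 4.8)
The plan is to prove this as the entropic analogue of Lemma \ref{lem:monge_map_ip}, which already identifies the unregularized Monge map for the inner-product cost with $-\nabla f$. Concretely, I would show that the barycentric-projection formula defining $T_{\varepsilon,\varepsilon'}$ is nothing but the negative gradient of the soft $c$-transform of $g_{\varepsilon,\varepsilon'}$, evaluated at the pushed-forward point $\mM_\varepsilon^\star x$. The whole argument reduces to writing the Sinkhorn first-order condition, differentiating it, and matching the result against \eqref{eq:ent-map-app}.

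First I would record the stationarity condition of the dual entropic problem defining $(f_{\varepsilon,\varepsilon'},g_{\varepsilon,\varepsilon'})$. Setting the $f$-derivative of that objective to zero gives, for ${\mM_\varepsilon^\star}_\sharp\alpha$-a.e.\ point $u \in \R^{d_y}$, the marginal fixed-point relation, equivalently the soft $c$-transform:
\begin{align}
    \int \exp\br{\frac{f_{\varepsilon,\varepsilon'}(u) + g_{\varepsilon,\varepsilon'}(y) + \ip{u}{y}}{\varepsilon'}}\diff\beta(y) = 1
    \;\Longleftrightarrow\;
    f_{\varepsilon,\varepsilon'}(u) = -\varepsilon'\log\int \exp\br{\frac{g_{\varepsilon,\varepsilon'}(y) + \ip{u}{y}}{\varepsilon'}}\diff\beta(y).
\end{align}
Here I must keep the sign convention straight: the cost is $c(u,y)=-\ip{u}{y}$, so $-c=+\ip{u}{y}$ is what enters the exponent, matching the definition in the excerpt.

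Next I would differentiate this closed form in $u$. Writing $Z(u)$ for the integral inside the logarithm, the chain rule yields $-\nabla f_{\varepsilon,\varepsilon'}(u) = \varepsilon'\,\nabla Z(u)/Z(u)$; differentiating $Z$ under the integral sign brings down a factor $y/\varepsilon'$, the two copies of $\varepsilon'$ cancel, and what remains is exactly the $\beta$-average of $y$ weighted by the same Gibbs weights $\propto \exp\br{\br{g_{\varepsilon,\varepsilon'}(y)+\ip{u}{y}}/\varepsilon'}$ as in \eqref{eq:ent-map-app}. Substituting $u=\mM_\varepsilon^\star x$ then makes $-\nabla f_{\varepsilon,\varepsilon'}(\mM_\varepsilon^\star x)$ coincide term-for-term with $T_{\varepsilon,\varepsilon'}(x)$, giving $T_{\varepsilon,\varepsilon'} = -\nabla f_{\varepsilon,\varepsilon'}\circ\mM_\varepsilon^\star$.

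There is no genuine analytic obstacle here. The only point needing justification is the interchange of gradient and integral, which is immediate because $\beta$ has compact support, so the integrand and its $u$-gradient are uniformly bounded and dominated convergence applies. The remaining care is purely bookkeeping: tracking the sign convention above and keeping the two regularization parameters $\varepsilon$ (used to produce $\mM_\varepsilon^\star$) and $\varepsilon'$ (the entropic parameter of the inner-product problem) distinct, since the displayed definition \eqref{eq:ent-map-app} contains typographical slips ($\varepsilon$ in place of $\varepsilon'$ and $g_{\varepsilon,\varepsilon^\star}$ in place of $g_{\varepsilon,\varepsilon'}$) that should be read as $\varepsilon'$. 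The conceptual content is simply that the entropic barycentric projection is the gradient of the soft $c$-transform of $g_{\varepsilon,\varepsilon'}$, exactly as in the squared-Euclidean case of \citet{pooladian2021entropic}.
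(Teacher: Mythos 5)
Your proposal is correct and follows essentially the same route as the paper's own proof: write the Sinkhorn optimality condition as a soft $c$-transform, differentiate it, and evaluate at $\tilde y = \mM_\varepsilon^\star x$ to recover the barycentric formula. Your added care about dominated convergence and the sign/typo bookkeeping is sound but does not change the argument.
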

\begin{proof}
    Using optimality conditions for the $\varepsilon^\prime$-entropy regularized OT problem, we have that for all $\tilde{y} \in \cY$,
    \begin{align}
        f_{\varepsilon, \varepsilon^\prime}(\tilde{y}) = - \varepsilon \log\br{\int \exp{\br{\br{g_{\varepsilon, \varepsilon^\prime}(y) + \ip{\tilde{y}}{y}}/\varepsilon}}d\beta(y)}
    \end{align}
    Differentiating w.r.t. $\tilde{y}$ gives
    \begin{align}
        \nabla f_{\varepsilon, \varepsilon^\prime}(\tilde{y}) = - \frac{\int y \exp{\br{\br{g_{\varepsilon, \varepsilon^\prime}(y) + \ip{\tilde{y} }{y}}/\varepsilon}}d\beta(y)}{\int \exp{\br{\br{g_{\varepsilon, \varepsilon^\star}(y) + \ip{\tilde{y} }{y}}/\varepsilon}}d\beta(y)}.
    \end{align}
    Hence for all $x \in \cX$, evaluating at $\tilde{y} = \mM_\varepsilon^\star x$
    \begin{align}
        \nabla f_{\varepsilon, \varepsilon^\prime}(\mM_\varepsilon^\star x) = - \frac{\int y \exp{\br{\br{g_{\varepsilon, \varepsilon^\prime}(y) + \ip{\mM_\varepsilon^\star x }{y}}/\varepsilon}}d\beta(y)}{\int \exp{\br{\br{g_{\varepsilon, \varepsilon^\star}(y) + \ip{\mM_\varepsilon^\star x }{y}}/\varepsilon}}d\beta(y)} = - T_{\varepsilon, \varepsilon^\prime}(x).
    \end{align}
    That is, $T_{\varepsilon, \varepsilon^\prime} = - \nabla f_{\varepsilon, \varepsilon^\prime} \circ \mM_\varepsilon^\star$.
\end{proof}

Finally, before moving on to the proof, recall the following result from \cite{vayer2020contribution}, also appearing in \cite[Proposition 6]{dumont2022existence}.

\begin{proposition}[Theorem 4.2.3 in \cite{vayer2020contribution}, Informal.]
Finding a Monge map for the inner-product GW problem reduces to finding a Monge map between $\alpha$ and $\beta$ for the cost $c_\mM(x,y) = - \ip{\mM^\star x}{y}$, where $\mM^\star = \int yx^\top \diff \pi^\star$, and $\pi^\star$ is a solution to the inner-product GW problem. Such a map is given by 
\begin{align}
    T = - \nabla f \circ \mM^\star,
\end{align}
where $f$ is a Kantorovitch potential for $\cW_{0, \ip{\cdot}{\cdot}}\br{\mM_\sharp \alpha, \beta}$.
\end{proposition}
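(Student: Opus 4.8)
The plan is to combine the cost-regularized reformulation of GW-IP with a change of variables that transfers the problem to an inner-product transport on $\R^{d_y}$, where Brenier's theorem (in the guise of Lemma \ref{lem:monge_map_ip}) applies directly. The structure is: reduce the quadratic GW-IP objective to a single linear transport problem with a fixed cost $c_{\mM^\star}$, then factor that linear problem through $\mM^\star$, and finally invoke Brenier on the pushforward measure.

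First I would establish the reduction to a linear transport problem. By Proposition \ref{prop:GW_equivalence_ROT}, \eqref{eq:GW-IP} is equivalent to the cost-regularized problem \eqref{eq:GW-IP-ROT}, so a GW-IP optimal coupling $\pi^\star$ extends to a joint minimizer $(\pi^\star, \mM^\star)$ of $\int -\ip{\mM x}{y}\diff\pi + \tfrac12\norm{\mM}_F^2$. Fixing $\mM = \mM^\star$, the regularizer is constant in $\pi$, so $\pi^\star$ minimizes $\pi \mapsto \int -\ip{\mM^\star x}{y}\diff\pi$; that is, $\pi^\star$ is an optimal plan for the linear cost $c_{\mM^\star}(x,y) = -\ip{\mM^\star x}{y}$. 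Fixing instead $\pi = \pi^\star$ and using Lemma \ref{lem:prox_M} with $g=0$, $\lambda=0$ identifies $\mM^\star = \int yx^\top\diff\pi^\star$. This is exactly the argument already used in the proof of Proposition \ref{prop:monge_map_linear_costs}, and it is what lets me replace the quadratic GW-IP objective by one linear OT problem with a fixed cost.

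Next I would push the linear problem through $\mM^\star$. Set $\mu \triangleq \mM^\star_\sharp\alpha \in \cP(\R^{d_y})$. For any $\pi \in \Pi(\alpha,\beta)$ the pushforward $(\mM^\star, \mathrm{id})_\sharp\pi$ lies in $\Pi(\mu,\beta)$ and has the same cost under $-\ip{\cdot}{\cdot}$, giving $\cW_{c_{\mM^\star}}(\alpha,\beta) \geq \cW_{-\ip{\cdot}{\cdot}}(\mu,\beta)$. For the reverse inequality I would take an optimal plan $\sigma^\star \in \Pi(\mu,\beta)$ for the inner-product cost and, once it is shown to be induced by a map $S$ (so $\sigma^\star = (\mathrm{id}, S)_\sharp\mu$), lift it to $\pi \triangleq (\mathrm{id}, S\circ\mM^\star)_\sharp\alpha$. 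Checking marginals gives $\pi \in \Pi(\alpha,\beta)$, since its second marginal is $S_\sharp\mu = \beta$, and a change of variables shows it attains $\cW_{-\ip{\cdot}{\cdot}}(\mu,\beta)$. Hence the two values coincide, $\pi$ is optimal for $c_{\mM^\star}$, and the optimal plan is carried by the deterministic map $T = S\circ\mM^\star$.

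Finally, since $\rk(\mM^\star) = d_y$ and $\alpha \ll \cL^{d_x}$ with $d_x \geq d_y$, I would argue that $\mu = \mM^\star_\sharp\alpha \ll \cL^{d_y}$: the preimage under the surjective full-rank map $\mM^\star$ of a Lebesgue-null set in $\R^{d_y}$ is Lebesgue-null in $\R^{d_x}$ (by a linear change of coordinates and Fubini), hence $\alpha$-null, so $\mu$ charges no null set. Absolute continuity of $\mu$ is exactly what Lemma \ref{lem:monge_map_ip} (Brenier) requires to guarantee that the inner-product optimal plan $\sigma^\star$ is a Monge coupling with $S = -\nabla f$, where $f$ is a Kantorovich potential for $\cW_{0,\ip{\cdot}{\cdot}}(\mu,\beta)$. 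Composing yields the announced map $T = -\nabla f \circ \mM^\star$. I expect the main obstacle to be precisely this absolute-continuity step together with the well-posedness of the lift: without full rank, $\mM^\star_\sharp\alpha$ need not be absolutely continuous and the clean Brenier reduction fails, which is why the rank-deficient case (treated separately in the excerpt) genuinely requires disintegrating $\alpha$ along the fibers of $\mM^\star$.
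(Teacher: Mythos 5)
Your argument is correct, and it is worth noting that the paper itself offers no proof of this proposition: it is recalled as an external result (Theorem 4.2.3 of \citet{vayer2020contribution}, also \citet[Proposition 6]{dumont2022existence}) immediately before the proof of Proposition \ref{prop:full_rank_prop}, so there is no internal proof to compare against. Your reconstruction, however, is assembled from exactly the ingredients the paper develops around the citation: the joint-minimizer reduction (fix $\mM^\star$ to get a linear OT problem with cost $c_{\mM^\star}$, fix $\pi^\star$ and use Lemma \ref{lem:prox_M} with $g=0$ to identify $\mM^\star = \int yx^\top \diff \pi^\star$) is the same device as in the proof of Proposition \ref{prop:monge_map_linear_costs}, and the factorization through $\mu = \mM^\star_\sharp\alpha$ followed by Brenier in the form of Lemma \ref{lem:monge_map_ip} is precisely how the paper's proof of Proposition \ref{prop:full_rank_prop} treats the $\varepsilon' \to 0$ limit. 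Two small points. First, the full-rank hypothesis $\rk(\mM^\star) = d_y$ is absent from the informal statement but present in the surrounding text; you correctly make it explicit, and your observation that it is exactly what guarantees $\mM^\star_\sharp\alpha \ll \cL^{d_y}$ (and that the rank-deficient case genuinely needs disintegration along fibers) matches the paper's own discussion. Second, to fully close the stated ``reduction'' you should add one line at the end: the Monge coupling $\hat\pi = (\mathrm{id}, T)_\sharp\alpha$ attains the same linear cost as $\pi^\star$ for $c_{\mM^\star}$, hence $(\hat\pi, \mM^\star)$ attains the same value of the \ref{eq:GW-IP-ROT} objective as $(\pi^\star, \mM^\star)$ and is therefore jointly optimal, so by the equivalence of minimizers in Proposition \ref{prop:GW_equivalence_ROT} the map $T = -\nabla f \circ \mM^\star$ indeed induces a solution of \ref{eq:GW-IP} itself, not merely of the auxiliary linear problem. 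This is a one-line addition, not a gap in the mathematics.
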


\begin{proof}[Proof of Proposition \ref{prop:full_rank_prop}]
Assume that $d_x \geq d_y$ and consider the case where $rk(\mM^*) = d_y$.

Using Lemma \ref{lem:monge_map_ip}, we now that the Monge map for $\cW^0_{\ip{\cdot}{\cdot}}\br{{\mM_\varepsilon^\star}_\sharp \alpha, \beta}$ is $T_{\varepsilon, 0}(x) = - \nabla f_{\varepsilon, 0}(x)$, where $f_{\varepsilon, 0}$ is a Kantorovitch potential for $\cW^0_{\ip{\cdot}{\cdot}}\br{{\mM_\varepsilon^\star}_\sharp \alpha, \beta}$. Since $f_{\varepsilon, \varepsilon^\prime}$ is a Sinkhorn potential for $\cW^{\varepsilon^\prime}_{\ip{\cdot}{\cdot}}\br{{\mM_\varepsilon^\star}_\sharp \alpha, \beta}$, we can apply
\cite[Corollary 1]{pooladian2021entropic} (here with the inner product cost rather than the squared euclidean cost), with $P = {\mM_\varepsilon}_\sharp \alpha$ and $Q = \beta$. We have that
\begin{gather}
    \int_{\cY} \sqn{\nabla f_{\varepsilon, \varepsilon^\prime}(y) - \nabla f_{\varepsilon, 0}(y)} \diff {\mM_\varepsilon}_\sharp \alpha(y) \leq {\varepsilon^\prime}^2 I_0({\mM_\varepsilon}_\sharp \alpha, \beta) + {\varepsilon^\prime}^{\br{\bar{\alpha} + 1}/2},
\end{gather}
Hence,
\begin{gather}
    \int_{\cX} \sqn{\nabla f_{\varepsilon, \varepsilon^\prime}(\mM_\varepsilon x) - \nabla f_{\varepsilon, 0}(\mM_\varepsilon x)} d\alpha(x)
    \leq  {\varepsilon^\prime}^2 I_0({\mM_\varepsilon}_\sharp \alpha, \beta) + {\varepsilon^\prime}^{\br{\bar{\alpha} + 1}/2},
\end{gather}
that is, using Lemma \ref{lem:monge_map_ip} and Proposition \ref{prop:ent_map_sinkhorn_potentials_app},
\begin{align}
    \int_{\cX} \sqn{T_{\varepsilon, \varepsilon^\prime}(x) - T_{\varepsilon, 0}(x)} \diff \alpha(x) \leq {\varepsilon^\prime}^2 I_0({\mM_\varepsilon}_\sharp \alpha, \beta) + {\varepsilon^\prime}^{\br{\bar{\alpha} + 1}/2}.
\end{align}
And since $I_0(\alpha_\varepsilon^\prime, \beta_\varepsilon^\prime) < \infty$ and doesn't depend on $\varepsilon^\prime$ \cite{chizat2020faster}, we have that for any $\varepsilon > 0$, $T_{\varepsilon, \varepsilon^\prime} \xrightarrow{L^2(\alpha)} T_{\varepsilon, 0}$ as $ \; \varepsilon^\prime \to 0$.

Now for the second statement. \cite{zhang2022gromov} showed that $\pi_\varepsilon^* \rightharpoonup_\varepsilon \pi_0^*$ along a subsequence. Hence, $\mM_\varepsilon := \int yx^\top \diff \pi_\varepsilon^\star \rightharpoonup_\varepsilon \mM := \int yx^\top d\pi^\star (x, y)$, which implies that ${\mM_\varepsilon}_\sharp \alpha \rightharpoonup \mM_\sharp \alpha$ along a subsequence.   

Since $\nabla f_{\varepsilon, 0}$ is a Monge map between ${\mM_\varepsilon}_\sharp \alpha$ and $\beta$, and $\nabla f_{0, 0}$ is a Monge map between $\mM_\sharp \alpha$ and $\beta$, it follows from the proof of \cite[Theorem 4.2]{philippis2013regularity} that $T_\varepsilon, 0 \xrightarrow[\varepsilon]{L_2(\alpha)} T_{0, 0}$. Indeed, it is shown that this implies that $\nabla f_{\varepsilon, 0}$ converges locally uniformly to $\nabla f_{0, 0}$, which in turn implies that $\nabla f_{\varepsilon, 0}$ converges uniformly to $\nabla f_{0, 0}$ since $\cX$ is compact. Thus, since $\mM_\varepsilon \rightarrow \mM$ and $\cX$ is compact, we have $T_{\varepsilon, 0} \xrightarrow[\varepsilon]{u} T_{0, 0}$, and consequently $T_{\varepsilon, 0} \xrightarrow[\varepsilon]{L_2(\alpha)} T_{0, 0}$ along a subsequence, which gives the desired result.
\end{proof}

\subsection{Using rank constraints in extremely high-dimensional problems} \label{app:rank_decomposition}
As we suggested at the end of Section \ref{sec:struct_ot}, using the proximal operators of the nuclear norm or of the rank can be infeasible in extremely high dimension because they involve computing the SVD of a high-dimensional matrix in the $\mM$-step of \ref{alg:Prox-ROT}. Thus, in such cases, we suggest to choose a rank $r \ll d_y$ and use the following regulariztion:
\begin{align}
    \cR(c) = \left\{
    \begin{array}{ll}
        \frac{1}{2} \norm{\mM_2^\top \mM_1}_F^2 & \mbox{if } \exists \br{\mM_1, \mM_2} \in \R^{d_x \times r} \times \R^{d_y \times r}: \forall (x, y), \; c(x,y) = \ip{\mM_1 x}{\mM_2 y} \\
        \infty & \mbox{otherwise.}
    \end{array}
\right.
\end{align}

In other words, we directly parametrize the matrix $\mM$ in \ref{eq:linear_rot} as a low-rank matrix: $\mM = \mM_2^\top \mM_1$. In this case, \ref{eq:linear_rot} rewrites:
\begin{gather}\label{eq:linear_rot_lr_app}
    \min_{\substack{\pi \in \Pi(\alpha, \beta) \\ \br{\mM_1, \mM_2} \in \R^{r \times d_x} \times \R^{r \times d_y}}} \int - \ip{\mM_1 x}{\mM_2 y} \diff \pi + \frac{1}{2} \norm{\mM_2^\top \mM_1}_F^2 + \varepsilon \textsc{KL}(\pi || \alpha \otimes \beta). \tag{L-$\cR \textsc{OT}_\varepsilon$}
\end{gather}
To solve this problem, we use alternated minimization (a.k.a. block coordinate descent) on each of the variables. Notably, we never have to compute the product $\mM_2^\top \mM_1$. Fixing $\pi$ and considering the first-order conditions on $(\mM_1, \mM_2)$ gives
\begin{align}
    & \br{\mM_2 \mM_2^\top}\mM_1 = \mM_2\int yx^\top \diff \pi(x,y)\\
    & \br{\mM_1 \mM_1^\top}\mM_2 = \mM_1\int xy^\top \diff \pi(x,y).
\end{align}
Thus, assuming that $\br{\mM_2 \mM_2^\top}$ and $\br{\mM_1 \mM_1^\top}$ are invertible, 
\begin{align}
    & \mM_1 = \br{\mM_2 \mM_2^\top}^{-1} \int \br{\mM_2 y}x^\top \diff \pi(x,y)\\
    & \mM_2 = \br{\mM_1 \mM_1^\top}^{-1} \int \br{\mM_1 x}y^\top \diff \pi(x,y).
\end{align}
So in practice, given $\pi_k$, we successively update 
\begin{align}
    & \mM_1^{k+1} = \br{\mM_2^k {\mM_2^k}^\top}^{\dagger} \int \br{\mM_2^k y}x^\top \diff \pi_k(x,y)\\ \label{eq:alg_rank_constraint}
    & \mM_2^{k+1} = \br{\mM_1^{k+1} {\mM_1^{k+1}}^\top}^{\dagger} \int \br{\mM_1^{k+1} x}y^\top \diff \pi_k(x,y),
\end{align}
where $\br{\mA}^\dagger$ denotes the Moore pseudoinverse of $\mA$. We can see that here all the operations are linear in $d_x$ and $d_y$. Moreover, as we choose a can choose a small rank $r \ll d_y$, the $\cO(r^3)$ dependency of computing the pseudoinverse can be much smaller than $\cO(d_y)$.

\textbf{Application of the updates \eqref{eq:alg_rank_constraint}.} We test the updates of \eqref{eq:alg_rank_constraint} in place of the proximal rank operator we used in Section \ref{sec:applications}. These updates result in very similar performance in practice (Figure \ref{fig:lr_fast_vs_sink}). We display Figure \ref{fig:lr_vs_sink} here again for the convenience of the reader. 

\begin{figure}[H]
\captionsetup{font=footnotesize}
\centering
\includegraphics[width=.75\linewidth, keepaspectratio]{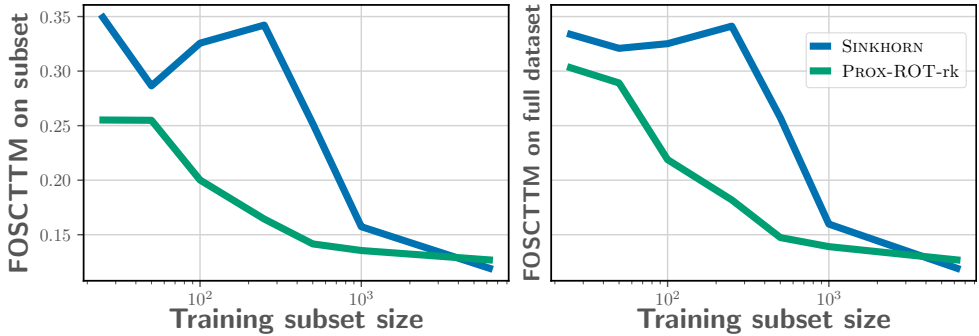}
\vspace*{.1mm}
\caption{Copy of Figure \ref{fig:lr_vs_sink}.}
\label{fig:lr_vs_sink_app}
\end{figure}

\begin{figure}[H]
\captionsetup{font=footnotesize}
\centering
\includegraphics[width=.75\linewidth, keepaspectratio]{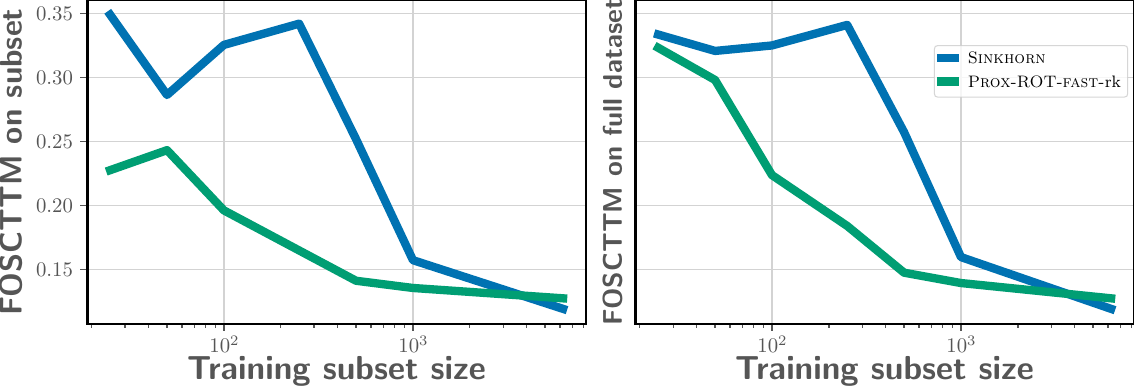}
\vspace*{.01mm}
\caption{Reproduction of Figure \ref{fig:lr_vs_sink_app} but with the updates \eqref{eq:alg_rank_constraint} instead of \ref{alg:Prox-ROT}.}
\label{fig:lr_fast_vs_sink}
\end{figure}

\section{Appendix for Section \ref{sec:applications}}

\subsection{Illustration of the action of the entropic map in the case where $\mM^\star$ is not full-rank}
One condition that we need for the convergence of the entropic map in Proposition \ref{prop:full_rank_prop} is that the matrix $\mM^\star$ (where $\br{\pi^\star, \mM^\star}$ is a solution of \ref{eq:linear_rot}) is full-rank. In Section \ref{sec:applications}, we showed through an experiment (see Figure \ref{fig:lr_vs_sink}) that even when the matrix $\mM^\star$ is explicitely constrained to being low-rank, the entropic map still results in good performance.

Here, we show through a more controlled experiment that indeed the fact that $\mM^\star$ is not full rank doesn't result in a divergent entropic map (as measured by the performance of the map when $\epsilon \approx 0$).

To do so, we simulate two point clouds coming from two Gaussian distributions, one in 20D and the other in 10D. We then use rank regularizations and take $\lambda = 12.5$, which is equivalent to constraining the matrix $\mM$ to being of at most of rank $5$. The reason we use Gaussians is that we know ground truth OT map between them \citep{salmona2022gromov}. 

\begin{figure}[H]
\captionsetup{font=footnotesize}
\centering
\includegraphics[width=\linewidth, keepaspectratio]{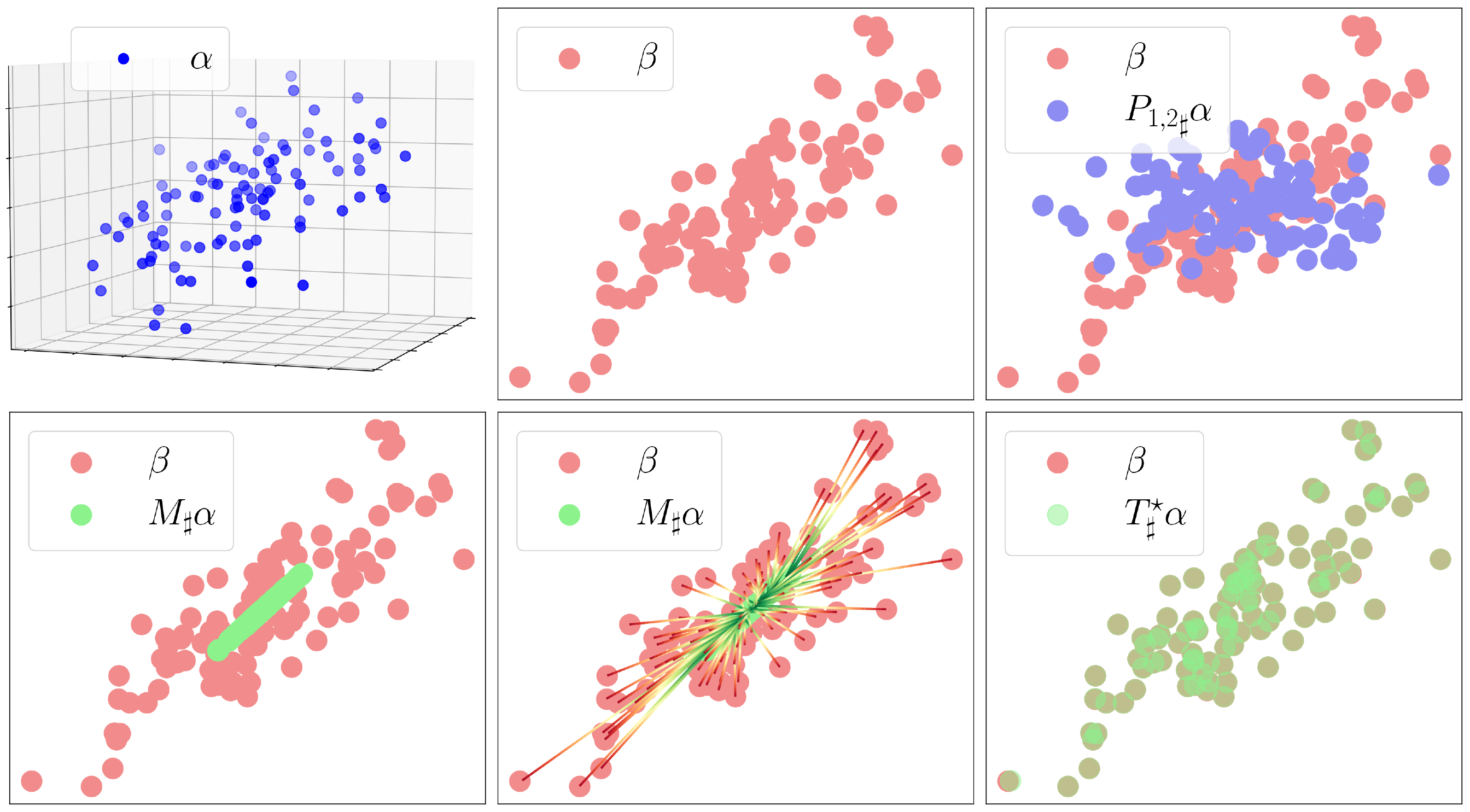}
\caption{Illustration of the action of the entropic map when $\mM^\star_\epsilon$ is low-rank.}
\label{fig:ent_map_low_rank}
\end{figure}

\subsection{Details on the evaluation metrics for sparse and low-rank transforms applications}
For our sparse transforms experiment on the scGM dataset (Section \ref{sec:sparse_lr_appli}), we measured the performance of the algorithms using Label Transfer Accuracy, as done in \cite{demetci2022scot}. Label Transfer Accuracy uses a k-NN classifier fitted on the data in the anchor domain (the one where we map using the entropic map, for us the data in the lower-dimensional data methylation domain $\left\{y_i\right\}_{i=1}^N$). Then the score is given as the test accuracy of the classifier on the mapped $\left\{T_\epsilon(x_i)\right\}_{i=1}^N$. 

For our low-rank transforms experiment on the Neurips 2021 multimodal single-cell integration dataset \citep{lance2022multimodal}, we measured the performance of the algorithms using the Fraction of Samples Closer than the True Match (FOSCTTM) as done in the \textsc{MOSCOT} package \citep{klein2023mapping}. The score computes the distances between each mapped sample $\left\{T_\epsilon(x_i)\right\}_{i=1}^N$ and the data $\left\{y_i\right\}_{i=1}^N$ and, given ground truth correspondences between data in $\cX$ and data in $\cY$, computes the FOSCTTM score as the average proportion of $y_i$'s closer to $T_\epsilon(x_i)$ than its true corresponding datapoint in $\left\{y_i\right\}_{i=1}^N$.

\subsection{Details on the spatial transcriptomics experiment}
As done in \cite{scetbon2023unbalanced}, to determine the best hyperparameters for the experiment, we ran a grid search and picked the best hyperparameters combination using performance on 10 validation genes and use pearson correlation as a validation metric. The hyperparameters we considered in our grid search are the entropic regularization $\epsilon$ and the fused cost parameter $\eta$ \eqref{eq:ROT_ent_fused}. We used a single minibatch size of $1000$ and a single number of ascent steps in $\br{f, g}$ of $1000$. For the SGD step in $\mM$, we use stochastic line search \citep{vaswani2019painless}. 

\end{document}